\documentclass{article}
\usepackage{iclr2027_conference,times}
\iclrfinalcopy
\usepackage{hyperref}
\usepackage{url}
\usepackage{amsmath,amssymb,amsthm,bm}
\usepackage{graphicx}
\usepackage{booktabs}
\usepackage{array}
\usepackage{multirow}
\usepackage{xcolor}
\usepackage{colortbl}
\usepackage{tikz}
\usetikzlibrary{arrows.meta,positioning,calc,decorations.pathmorphing,fit,backgrounds}
\usepackage{subcaption}
\usepackage{float}
\usepackage{textcomp}
\usepackage{algorithm}
\usepackage{algorithmic}
\usepackage{wrapfig}

\newcommand{\methodname}{CFRL}
\newcommand{\method}{\textsc{\methodname}}
\newcommand{\methodGRPO}{\method{}-GRR}
\newcommand{\methodSVGScalar}{\method{}-SVG-1D}
\newcommand{\methodSVGTwoD}{\method{}-SVG-2D}

\usepackage{amsmath,amsfonts,bm}

\def\eqref#1{equation~\ref{#1}}

\def\1{\bm{1}}

\def\rvc{{\mathbf{c}}}

\def\rvh{{\mathbf{h}}}

\def\rvw{{\mathbf{w}}}
\def\rvx{{\mathbf{x}}}

\def\rvz{{\mathbf{z}}}

\DeclareMathAlphabet{\mathsfit}{\encodingdefault}{\sfdefault}{m}{sl}
\SetMathAlphabet{\mathsfit}{bold}{\encodingdefault}{\sfdefault}{bx}{n}

\makeatletter
\newcommand{\eqalias}[1]{\begingroup\def\@currentlabel{\theequation}\label{#1}\endgroup\ignorespaces}
\newcommand{\eqaliasprev}[1]{\begingroup\edef\@currentlabel{\the\numexpr\value{equation}-1\relax}\label{#1}\endgroup\ignorespaces}
\makeatother

\newcommand{\hl}[1]{\textcolor{blue!70!black}{#1}}
\definecolor{ctwod}{HTML}{2A6F97}\definecolor{cscalar}{HTML}{E76F51}\definecolor{cgrpo}{HTML}{C9553D}\definecolor{cctrl}{HTML}{495057}
\definecolor{cgpt}{HTML}{2A9D8F}\definecolor{cgrey}{HTML}{8D99AE}\definecolor{cblack}{HTML}{212529}\definecolor{cpurple}{HTML}{6A4C93}

\newcommand{\OURScd}{51.4}\newcommand{\OURScdo}{47.8}\newcommand{\OURScdg}{45.6}\newcommand{\OURScdt}{51.1}
\newcommand{\OURSkk}{99.1}\newcommand{\OURSkko}{95.7}\newcommand{\OURSkkg}{93.8}
\newcommand{\OURScdthree}{50.0}\newcommand{\OURScdthreeo}{46.3}\newcommand{\OURScdthreeg}{43.6}\newcommand{\OURScdthreet}{50.2}\newcommand{\OURSpoolthree}{95.6k / 37.2k}\newcommand{\OURSpoolfour}{99.1k / 37.8k}

\newcommand{\CDa}[1]{\ifcase#1 \or 0.93\or 0.97\or 0.94\or 1.00\or \textbf{58.2}\or \textbf{57.5}\fi}
\newcommand{\CDb}[1]{\ifcase#1 \or 0.53\or 0.52\or 0.59\or 0.59\or 47.0\or 12.9\fi}
\newcommand{\CDc}[1]{\ifcase#1 \or 0.65\or 0.65\or 0.82\or 0.81\or 51.7\or 13.4\fi}
\newcommand{\CDd}[1]{\ifcase#1 \or 0.43\or 0.39\or 0.67\or 0.68\or ---\or ---\fi}
\newcommand{\GBa}[1]{\ifcase#1 \or 25.4\or 2{,}783\or 4{,}978\fi}
\newcommand{\GBb}[1]{\ifcase#1 \or 25.2\or 2{,}748\or 4{,}960\fi}
\newcommand{\GBc}[1]{\ifcase#1 \or 52.8\or 3{,}712\or 10{,}204\fi}
\newcommand{\GBd}[1]{\ifcase#1 \or 28.4\or 3{,}069\or 5{,}647\fi}
\newcommand{\GBe}[1]{\ifcase#1 \or \textbf{59.3}\or \textbf{3{,}933}\or \textbf{11{,}753}\fi}
\newcommand{\ABp}[1]{\ifcase#1 \or 49.9\or 54.8\or 53.0\or 51.7\or 51.8\or 51.9\or 54.8\fi}
\newcommand{\ABm}[1]{\ifcase#1 \or 45.5\or 52.2\or 54.2\or 55.3\or 55.7\or 55.5\or 55.7\fi}
\newcommand{\ABcp}[1]{\ifcase#1 \or 44.5\or 45.6\or 47.6\or 48.2\or 47.5\or 48.0\or 48.2\fi}
\newtheorem{proposition}{Proposition}

\title{Backpropagating Verifier Rewards Through Continuous Flow Language Models}
\author{
\textbf{Xu Han}\thanks{Equal contribution.}\\
\normalfont Tufts University
\And
\textbf{Jiajing Hu}\footnotemark[1]\\
\normalfont Tufts University
\And
\textbf{Li-Ping Liu}\thanks{Contact: \texttt{liping.liu@tufts.edu}.}\\
\normalfont Tufts University
}

\begin{document}
\maketitle
\lhead{Preprint. Under review.}
\begin{abstract}
Reinforcement learning (RL) has substantially improved the reasoning of autoregressive and discrete-diffusion language models. Language models that generate continuous token embeddings are emerging as a strong alternative, but RL post-training for them remains understudied. We propose \method{}, a post-training framework for continuous flow language models. Such a model denoises token embeddings over multiple steps and decodes them into discrete tokens only at the last step. To obtain pathwise gradients, which backpropagate through continuous actions and have much lower variance than REINFORCE, we introduce a critic that estimates the value of intermediate token embeddings, bypassing the non-differentiable decoding step. The critic must also keep track of the value function as the policy changes; to this end, we design a critic that predicts a value from every token representation and improves policy optimization. Across four reasoning tasks, \method{} reliably improves a continuous flow language model through RL post-training and outperforms baseline methods of the same type.
\end{abstract}

\section{Introduction}
\label{sec:intro}
Reinforcement learning (RL) post-training has become a central mechanism for turning pretrained autoregressive (AR) language 
models into reasoners \citep{ouyang2022instructgpt,shao2024deepseekmath,deepseek2025r1}. The explicit token probabilities of AR models
provide a natural RL interface: each token choice is an action that an advantage can reinforce or suppress.
However, AR generation requires one sequential model evaluation per token, tying latency to sequence length. Diffusion-based
language models ease this bottleneck by jointly refining all token positions over a fixed number of denoising steps,
which can substantially speed up generation \citep{nie2025llada,sahoo2024mdlm,lou2023sedd}.
RL has recently been applied to diffusion-based language models
with promising results \citep{zhao2025d1,zhu2025llada15,shao2026papo}. Because these models operate on
discrete tokens and take discrete actions during generation, RL methods for them, including PPO \citep{schulman2017ppo}
and GRPO \citep{shao2024deepseekmath}, rely mainly on score-function (REINFORCE) gradients \citep{williams1992reinforce}.

Continuous language models (CLMs) have historically lagged behind discrete counterparts in language generation 
\citep{li2022diffusionlm,gong2022diffuseq,dieleman2022continuous}. The recent ELF \citep{hu2026elf}, a CLM based on
flow matching \citep{lipman2022flow,liu2022rectified}, closes this gap: it achieves lower generative perplexity with fewer
sampling steps while using ten times fewer training tokens. ELF progressively denoises Gaussian noise into real-valued token embeddings, then decodes the final embeddings into discrete tokens. Its continuous generation trajectory is
differentiable, which makes it possible to backpropagate reward signals through the denoising process.


In this work, we propose \textbf{C}ontinuous \textbf{F}low \textbf{R}einforcement learning for \textbf{L}anguage models (\method{}), 
a post-training framework for improving CLM reasoning. We first adapt group-relative REINFORCE \citep{yao2026grouprelative}
to CLM post-training, using a verifier applied to the decoded response as the reward. We then exploit the differentiability
of the continuous trajectory. Because the CLM's decoder is not differentiable, we train a critic to predict the expected verifier
outcome from intermediate denoising states. Near the end of the trajectory,
the critic can predict this outcome accurately because the state contains nearly all the information needed to determine
the decoded response. Its prediction is a differentiable reward estimate that can be backpropagated through a flow
transition. We further design a novel critic architecture that propagates reward feedback to each token representation:
it predicts a value from every token representation and trains each prediction against the sequence-level verifier outcome.
This design turns the verifier signal into position-specific gradients, enabling more effective policy optimization.

We evaluate our framework under a unified protocol on four program-verifiable reasoning tasks: Countdown, Sudoku, Knights-and-Knaves, and Game24. 
Our models perform comparably to or better than baselines of similar or larger size. Our contributions are as follows.
\begin{itemize}
    \item To the best of our knowledge, we demonstrate the first reliable improvement in the reasoning capabilities of a CLM through pathwise reinforcement learning.
    \item We propose a novel critic architecture that consistently improves RL performance and can be readily adapted to other denoising-based generative models.
    \item We evaluate our approach on four program-verifiable reasoning tasks and demonstrate its effectiveness.
\end{itemize}

\section{Related Work}
\label{sec:related}

\paragraph{Diffusion and flow language models.}
Discrete diffusion language models, including SEDD, MDLM, and LLaDA \citep{lou2023sedd,sahoo2024mdlm,nie2025llada}, generate through discrete or masked-token trajectories. Continuous-state language generation instead diffuses token embeddings, beginning with Diffusion-LM \citep{li2022diffusionlm}. Although earlier continuous models lagged behind discrete diffusion, ELF \citep{hu2026elf} closes this gap and outperforms discrete counterparts by using flow matching \citep{lipman2022flow,liu2022rectified,albergo2023stochastic} in a continuous embedding space and decoding tokens only at the trajectory endpoint.

\paragraph{RL for denoising generative models.}
RL for image diffusion commonly casts denoising as a sequential decision process and applies score-function policy gradients, as in DDPO and DPOK \citep{black2023ddpo,fan2023dpok}. Flow-GRPO, DanceGRPO, and MixGRPO adapt this approach to flow-matching models \citep{liu2025flowgrpo,xue2025dancegrpo,li2025mixgrpo}, while FPO uses the flow-matching objective as a likelihood surrogate within PPO \citep{mcallister2025fpo}. RL for discrete diffusion language models relies on score-function updates using one-step likelihood approximations, variational objectives, or actions at intermediate denoising states \citep{zhao2025d1,zhu2025llada15,shao2026papo}. 

\paragraph{RL for autoregressive language models.}
RL post-training has enabled autoregressive language models, including OpenAI-o1 \citep{openai2024o1} and DeepSeek-R1 \citep{deepseek2025r1}, to become strong reasoners. These methods optimize categorical token actions with score-function methods such as PPO and GRPO \citep{shao2024deepseekmath,schulman2017ppo}, with recent variants modifying clipping and sampling \citep{yu2025dapo,zheng2025gspo}.

\section{Preliminaries: Continuous Flow Language Modeling and RL}
\label{sec:prelim}

\paragraph{Continuous flow language modeling.}
We build our work on ELF \citep{hu2026elf}, a flow-based language model. ELF represents text tokens using embeddings produced by a frozen T5 encoder \citep{raffel2020t5}. Let $\rvx=(x_1,\ldots,x_L)$ denote the embeddings of the tokens in a target sequence $\rvw=(w_1,\ldots,w_L)$. When the target sequence is conditioned on additional context, such as a prompt, we denote the corresponding context-token embeddings by $\rvc$. 

ELF models the conditional distribution $p_\theta(\rvx \mid \rvc)$ using flow matching (FM), where a neural network learns to transform noisy states into clean token embeddings. At time $t\in[0,1]$, the noisy state is defined as $\rvz_t=(1-t)\,\epsilon+t\,\rvx$, where $\epsilon\sim\mathcal N(0, c_\sigma^2I)$ is Gaussian noise with fixed scale $c_\sigma^2$, and $\rvx$ denotes the clean target embeddings. ELF adopts an $x$-prediction parameterization \citep{li2026back, han2026self}: given a noisy state $\rvz_t$, the denoiser predicts an estimate $\rvx_t$ of the clean embeddings $\rvx$. Let $0=t_0<\cdots<t_K=1$, denote a discretization of the flow time. The corresponding denoising update is
\begin{align} 
\rvx_{t_{k}} = \mathrm{d}_\theta(\rvz_{t_{k}}, t_{k}, \rvx_{t_{k-1}}, \rvc)
\label{eq:denoiser}
\end{align} 
We explicitly incorporate the self-conditioning \citep{chen2022analog}, whereby the denoiser $\mathrm{d}_\theta$ takes the previous clean-embedding estimate $\rvx_{t_{k-1}}$ as an additional input. The denoiser $\mathrm{d}_\theta$ is implemented as a diffusion Transformer \citep{peebles2023scalable}, with $\theta$ denoting its learnable parameters. It is trained by minimizing the mean squared error (MSE) between the predicted clean embedding $\rvx_t$ and the corresponding ground-truth embedding $\rvx$.

At inference time, we consider the SDE sampling of ELF, which injects stochastic noise at each generation step. This stochastic sampling is particularly useful for the reinforcement learning framework introduced later, as it induces a distribution over generation trajectories rather than a single deterministic path.
\begin{align} 
\rvz_{t_k} =  \lambda_k  \rvz_{t_{k-1}} + \eta_k \rvx_{t_{k-1}} + \psi_k \epsilon, \qquad k=1, \ldots, K  
\label{eq:sde}
\end{align} 
The scalars $\lambda_k$, $\eta_k$, and $\psi_k$ are determined by the step size and the noise reinjection scale specified by the ELF sampling algorithm. Sampling is initialized with $\rvz_0 \sim \mathcal{N}(0,c_\sigma^2 I)$ and $\rvx_{t_{-1}}=\mathbf{0}$. After $K$ denoising steps, the procedure produces $\rvz_1$, which corresponds  to the clean embedding $\rvx$.

ELF further learns a decoder $p_\theta(\rvw \mid \rvx)$ that maps the final denoised embeddings $\rvz_1$ to the discrete token sequence $\rvw$. Specifically, each embedding in $\rvz_1$ is projected to the token vocabulary through a linear output layer, yielding a distribution over discrete tokens.
\begin{align}
w_i \sim \mathrm{softmax}(W \rvz_{1}[i]), \qquad i=1, \ldots, L
\end{align}
Here, $W$ denotes the linear output layer that maps each embedding to a vector of vocabulary logits. We include $W$ in the set of learnable parameters $\theta$. Together, $p_\theta(\rvx\mid\rvc)$ and the decoder $p_\theta(\rvw\mid\rvx)$ define the conditional language distribution $p_\theta(\rvw\mid\rvc)$. A derivation of the stochastic ELF sampler is provided in Appendix~\ref{app:kernels}.

\section{Method}
The generation procedure of ELF introduced in Section~\ref{sec:prelim} proceeds over a fixed horizon of $K+1$ steps. At each generation step $k\in\{1,\ldots,K\}$, we define the MDP state as $s_k = (\rvz_{t_k}, \rvx_{t_{k-1}})$. The initial state is $s_0 = (\epsilon, \mathbf{0})$, where $\epsilon$ is the Gaussian noise defined above. We include $\rvx_{t_{k-1}}$ in the state because it is required by the self-conditioning mechanism of ELF. We omit $\rvc$ and $t_{k}$ from the state $s_k$ for notational simplicity. The terminal state $s_{K+1} = \rvw$  corresponds to the decoded response. 

The actions consist of the denoising predictions followed by the final decoding step. Specifically, for $k \in \{0, \ldots, K-1\}$, we define the action as $a_{k} = (\rvx_{t_{k}}, \epsilon)$. The corresponding policy $\pi_\theta(a_k | s_k)$ defined by the denoiser $\mathrm{d}_\theta$ and the noise distribution. The final action is $a_{K} = \rvw$, and the policy at this step is $\pi_\theta(a_K | s_K)=p_\theta(\rvw | \rvz_{t_K})$. 

The transition probability is defined by the denoising process in~(\ref{eq:sde}). A terminal reward  $R(s_{K+1})$ is provided by a verifier that evaluates 
the decoded response $\rvw$. The reinforcement learning objective is to maximize the expected terminal reward over trajectories induced by the policy:
\begin{equation}
\max_\theta \mathbb E_{s_0\sim p_0, a_k\sim \pi_\theta(\cdot|s_k), s_{k+1}\sim p(\cdot|s_k, a_k)}[R(s_{K+1})].
\label{eq:objective}
\end{equation}

\subsection{CFRL-GRR: Learning with group-relative REINFORCE}

Similar to other language models, an ELF model can be post-trained using policy-gradient methods \citep{sutton1998reinforcement}, without concerning the last non-differentible decoding step. We start with a simple method and consider a variant of the group-relative REINFORCE method \citep{yao2026grouprelative}. 
For each context $\rvc$, we sample a group of $G$ responses, $(s^1_{K+1}, \ldots, s^G_{K+1})$. For each response, we additionally record the corresponding generation trajectory. Let 
$r^g = R(s^g_{K+1})$, then the group advantage is computed as $A^g =  \frac{r(s^g_{K+1}) - \mathrm{mean}(r^{1:G})}{\mathrm{std}(r^{1:G})}$.
The maximization objective is:
\begin{equation}
\mathcal J_{\mathrm{GRR}}(\theta)=\frac1G\sum_{g=1}^{G}A^g\Bigl[\sum_{k=0}^{K-1}\log\pi_\theta\bigl(a^g_k \mid s^g_{k}\bigr)+\frac{1}{L} \log\pi_\theta\bigl(a^g_{K} \mid s^g_K\bigr)\Bigr].
\label{eq:grpo_loss}
\end{equation}
The last step, $\pi_\theta\bigl(a^g_{K} \mid s^g_K\bigr)$, decodes all $L$ token embeddings with a shared output matrix.
Because this decoding step is comparatively easy to optimize, we scale its term by $\frac{1}{L}$ to slow its updates.

We also experimented with GRPO \citep{shao2024deepseekmath} with various settings and found that neither ratio clipping nor a KL penalty was necessary in our setting.
This observation that the KL-penalty has no benefit is consistent with findings reported in DAPO \citep{yu2025dapo}. One important distinction is that our denoiser operates over a continuous state space of dimension $dL$, whereas an AR language model defines a discrete distribution over a vocabulary at each generation step. These differences in the underlying generation process and policy parameterization may call for different RL optimization strategies. We leave an  investigation of such training techniques to future work.

\subsection{\methodSVGScalar{}: Learn with pathwise RL with a critic}
\label{sec:method}
\begin{wrapfigure}[19]{r}{0.49\columnwidth}
  \vspace{-0.8\baselineskip}
  \centering
  \includegraphics[width=\linewidth]{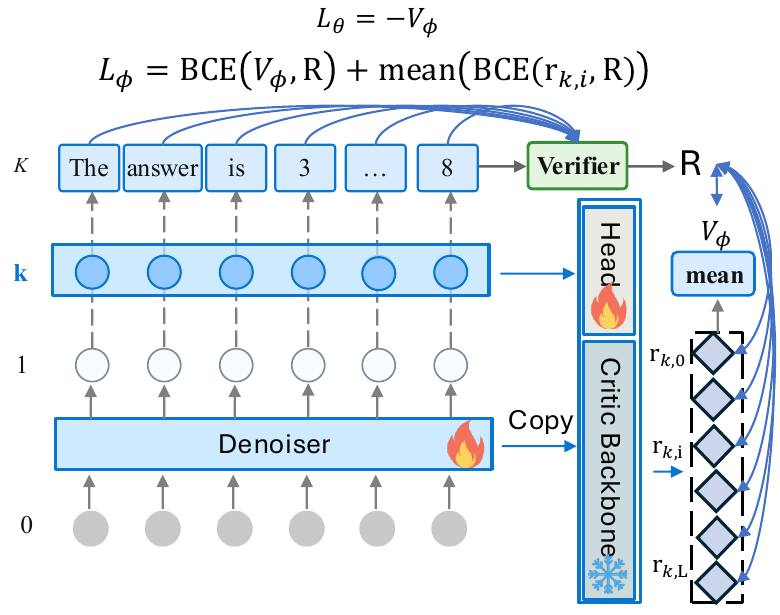}
  \caption{\textbf{Joint actor--critic training.}
  The denoiser serves as the actor. The critic uses a frozen copy of the
  denoiser backbone and a value head trained jointly with the actor.}
  \label{fig:mechanism}
  \vspace{-0.6\baselineskip}
\end{wrapfigure}

In our MDP formulation, only the last decoding step blocks gradient flow from the reward to the policy. We use stochastic 
value gradients (SVG) \citep{heess2015svg} and learn a critic to backpropagate the reward signal in denoising steps.  

The value function $V^\star_k(s_k) = \mathbb E[R(s_{K+1}) \mid s_k]$ of a state $s_k$ is the expected reward of all continuations from that state. 
Since the reward $R(s_{K+1})$ is binary, the value $V^\star_k(s_k)$ is the probability $p_\theta(R(s_{K+1}) | s_k)$ of success from $s_k$. 
Note that the MDP problem here has a finite horizon so the value function is relatively easy to learn. 

We train a critic $V_\phi$ to predict the value function $V^\star_k(s_k)$ from the state $s_k$. The critic is trained with 
a binary cross-entropy loss against the verifier reward $R(s_{K+1})$ at the end of the trajectory. The minimization objective is:
\begin{equation}
\mathcal L_{\mathrm{critic}}(\phi) = \mathbb{E}_{s_k, s_{K+1}} \Bigl[ \mathrm{BCE}(V_\phi(s_k), R(s_{K+1}))\Bigr].
\label{eq:critic_loss}
\end{equation}
Here $\mathrm{BCE}(V_\phi(s_k), R(s_{K+1}))$ is the binary cross-entropy loss of $V_\phi(s_k)$ with the true binary label $R(s_{K+1})$. 
The loss is consistent here: the true probability $p_\theta(R(s_{K+1}) | s_k)$ receives the lowest expected loss.  

The critic can learn the value function fairly accurately. Since the FM model is trained to generate accurate token embeddings that 
can be decoded without much ambiguity. We further freeze the decoder so that denoised token embeddings keep the same semantic. 
The critic can read out the information about the response from these embeddings at the later denoising stage. 

Now we can directly backpropagate the reward information to denoising actions. We don't have reward in non-terminal states, so the maximization objective is
\begin{align}
 \mathcal{J}_{\mathrm{SVG}}(\theta) &= \mathbb{E}_{s_{k - 1}, s_k} \Bigl[ V_\phi(s_k) \Bigr]
= \mathbb{E}_{\rvz_{t_{k - 1}}, \rvx_{t_{k-2}}, \varepsilon} \Bigl[ V_\phi(\rvz_{t_k}, \rvx_{t_{k-1}}) \Bigr] \quad \text{with} \\
\rvx_{t_{k-1}} &= \mathrm{d}_\theta(\rvz_{t_{k-1}}, \rvx_{t_{k-2}}),  \quad \rvz_{t_{k}} = \lambda_{k} \rvz_{t_{k-1}} + \eta_k \rvx_{t_{k-1}} + \psi_k \varepsilon
\end{align}
In the calculation of $\nabla_\theta \mathcal{J}_{\mathrm{SVG}}(\theta)$, we only use the gradient from one transition and do not take the derivative of 
state $s_{k-1} = (\rvz_{t_{k - 1}}, \rvx_{t_{k-2}})$. Then the calcualtion of $\mathcal{J}_{\mathrm{SVG}}(\theta)$ can be viewed reparameterization, 
which allows the gradient to propagate from $V_\phi$ to $d_\theta$ directly.    

In our architecture design, the critic $V_\phi$ takes Transformer architecture from the trained EFL model but keeps its own 
copy. The Transformer takes 
$(\rvz_{t_{k}}, t_k, \rvx_{t_{k-1}}, \rvc)$ as input and produce a sequence of  
$L$ token-level hidden representations, $\rvh = (h_1, \ldots, h_L)$. Then mean pooling is applied over 
these hidden representations to give a sequence-level representation, which is then mapped to a scalar value prediction:
\begin{align}
\label{eq:value1}
V_{\phi}(s_k) = \mathrm{sigmoid}(f_{\phi}(\mathrm{mean}(\rvh)))    
\end{align}
Here $f_{\phi}$ is an MLP, whose parameters are included in $\phi$.

\begin{figure}[!t]
\centering
\includegraphics[width=\textwidth]{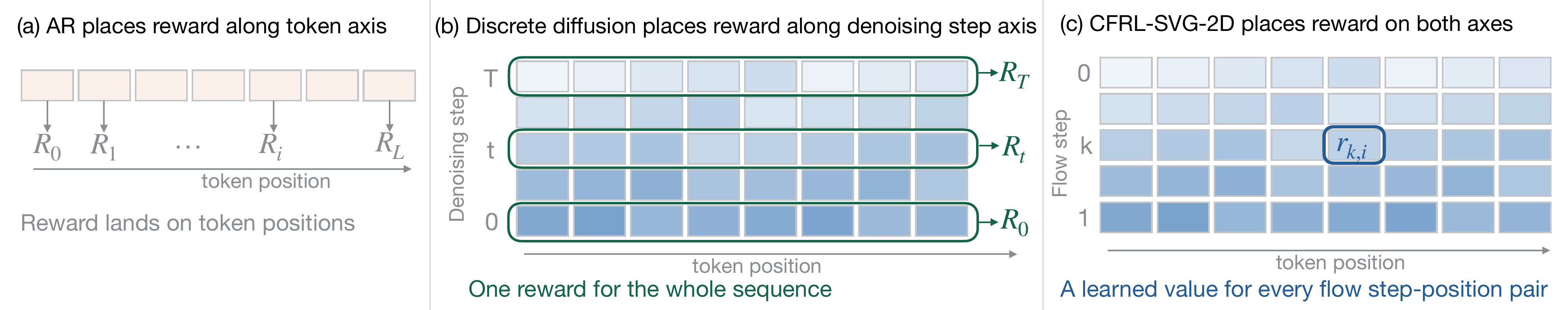}
\caption{Where the reward lands: (a) on the token axis in an AR model; (b) on the step axis in RL for diffusion and flow models; (c) on both in \methodSVGTwoD{}: $r_{k,i}$ for flow step $k$ and position $i$, each an estimate of the whole response's outcome.}
\label{fig:2dcritic}
\end{figure}


\subsection{\methodSVGTwoD{}: Token-wise reward feedback}
\label{sec:critic}\label{sec:jpc_scalar}\label{sec:jpc_2d}

A plain design of the critic architecture needs to aggregate transformer outputs $\rvh=(h_1, \ldots, h_L)$ into a single vector and then compute the value from it. The aggregation can be mean pooling (our method) or weighted average with attention weights from a special class token.  
In either case, the aggregation creates a bottleneck for information flow. From the perspective of gradient calculation, the signal from the reward is distributed to multiple tokens. In this subsection, we design a special architecture that allows 
reward signal to directly pass to single tokens. 

In the new neural architecture, we compute a prediction of the reward $r_{k,i}$ for each token and aggregate these token-wise
predictions to get the critic output.  
\begin{align}
\label{eq:value2}
V_\phi(s_k) = \mathrm{sigmoid}\left(\sum_{i=1}^{L} f_{\phi}(h_i)\right), \quad r_{k,i} = \mathrm{sigmoid}(f_{\phi}(h_i)). 
\end{align}
Here $h_i$ is the output embedding for token $i$ when the transformer takes $s_k$ as the input. The function $f_{\phi}(\cdot)$ is an MLP head. 

During model training, we compare both the token prediction $r_{k, i}$ and the final prediction $V_\phi(s_k)$ against the reward 
$R(s_k)$. The loss for fitting the new critic is: 
\begin{equation}
\mathcal L_{\rm 2D}(\phi)=\mathbb E_{s_k,s_{K+1}}\Big[\,\mathrm{BCE}\big(V_\phi(s_k), R(s_{K+1}) \big)
+\tfrac1{L}\textstyle\sum_{i=1}^L\mathrm{BCE}\big(r_{k,i}, R(s_{k+1})\big)\Big]. 
\label{eq:critic2d_loss}
\end{equation}\eqalias{eq:bce}
In this new design, each token vector receives the reward signal directly.


\subsection{Architecture and model training}

For each task, we first perform supervised fine-tuning (SFT) on the pre-trained ELF model to obtain a task-specific policy. We then pre-train the critic on this policy's own rollouts: we generate full trajectories with the task-SFT sampler, evaluate each decoded response with the task verifier to obtain its terminal reward $R(s_{K+1})$, and pair a subset of intermediate states $s_k$ of every trajectory with that reward. The critic starts from a copy of the task-SFT denoiser backbone with an added value head $f_\phi$, and backbone and head are both trained on these state--reward pairs with the BCE loss~(\ref{eq:critic_loss}) or~(\ref{eq:critic2d_loss}).

We then train the policy and the critic jointly, alternating one policy update with a few critic updates (Algorithm~\ref{alg:jpc}); the critic backbone stays frozen and only its head is updated. A policy update picks one denoising step from the middle of the trajectory, where the state already carries information about the outcome but the generation has not yet committed to it. We roll out the current policy up to that step without gradients, then differentiate through the single denoiser call at that step and update $\theta$ to increase $V_\phi$: the policy learns to move the trajectory toward states of higher predicted reward. A critic update uses fresh rollouts of the updated policy, decoded and verified, and refits the head to their states and outcomes.

After RL, we add an optional self-training loop. Each round samples the RL policy on unlabelled problems, keeps the answers the verifier accepts, and fine-tunes the model on them with ELF's supervised loss. Since that loss trains the whole denoiser, the gain becomes a property of the model rather than of the sampler used in RL, so faster samplers benefit too.

\begin{algorithm}[t]
\caption{Joint training of the policy and critic.}
\label{alg:jpc}
\small
\begin{algorithmic}[1]
\REQUIRE policy parameters $\theta$; pre-trained critic $V_\phi$ with trainable head parameters $\phi$ on a fixed copy of the denoiser; verifier $R$; sampler $(\gamma,K)$; window $\mathcal W$ with top step $k_{\rm hi}$; buffer $\mathcal B$ and its size $N_{\mathcal B}$; iterations $T$, prompts per iteration $B$, critic minibatch $B'$ and steps $m_{\rm c}$, AUC check every $n_q$ iterations on the newest $N_q$ states with floor $a_0$ and $m'_{\rm c}$ extra steps; learning rates $\eta_\theta,\eta_\phi$
\FOR{iteration $=1,\dots,T$}
  \STATE \textbf{Actor.} For $B$ prompts and one $k\sim\mathcal U(\mathcal W)$: roll out under $(\gamma,K)$ to $s_{k-1}$ without gradients, take the denoiser call at $s_{k-1}$ with gradients to form $s_k$, and set $\theta\leftarrow\theta+\eta_\theta\,\mathrm{AdamW}\big(\nabla_\theta\frac1B\sum_{g=1}^{B}V_\phi(s^g_k)\big)$ \hfill (eq.~\ref{eq:value1}; eq.~\ref{eq:value2}; gradient-norm clipped)
  \STATE \textbf{Buffer.} Roll out $B$ fresh prompts under $(\gamma,K)$ to the end, decode, verify; push each $(s_{k_{\rm hi}},R(S_{K+1}))$ into $\mathcal B$, keeping the newest $N_{\mathcal B}$
  \STATE \textbf{Critic.} $m_{\rm c}$ steps $\phi\leftarrow\phi-\eta_\phi\,\mathrm{AdamW}(\nabla_\phi\mathcal L_{\rm 2D})$ on minibatches of $B'$ states of $\mathcal B$; every $n_q$ iterations, $m'_{\rm c}$ more if the AUC of $V_\phi$ on the newest $N_q$ states is below $a_0$ \hfill (eq.~\ref{eq:critic2d_loss}; eq.~(\ref{eq:critic_loss}))
\ENDFOR
\end{algorithmic}
\end{algorithm}


\section{Experiments}
\label{sec:experiments}
We evaluate \method{} on four reasoning tasks with program verifiers. \textbf{Countdown}: combine four or five given integers with arithmetic operations to reach a target. \textbf{Sudoku}: complete a $9\times9$ grid from its givens. \textbf{Knights-and-Knaves} (K\&K): from the statements of four to six characters who each always tell the truth or always lie, decide who is which. \textbf{Game24}: reach 24 from four given numbers, each used once.

\paragraph{Setup.}
Every model is first fine-tuned on each task (task SFT); the ELF-B (105M) task-SFT model is the base of all flow runs. Held-out prompts are split into a validation half, on which checkpoints are selected, and a test half, on which every accuracy is reported. The deployment sampler is $\gamma56$@64 (churn $\gamma{=}56$, $K{=}64$ steps; $\gamma28$@32 on Sudoku) and the cheap sampler $\gamma4$@16, both through the same denoiser. As references from other architectures we train GPT-2 \citep{radford2019gpt2} with token-level GRPO/PPO and LLaDA-8B-Instruct \citep{nie2025llada} with d1's diffu-GRPO \citep{zhao2025d1} on the same data. More details are in Appendices~\ref{app:datasets} and~\ref{app:details}.

\begin{table}[t]
\centering\small
\begin{minipage}[c]{0.615\linewidth}
\centering\footnotesize
\setlength{\tabcolsep}{1.5pt}
\begin{tabular}{@{}lcccc@{}}
\toprule
\textbf{Model} & \textbf{Countdown} & \textbf{Sudoku} & \textbf{K\&K} & \textbf{Game24} \\
\midrule
\multicolumn{5}{c}{\emph{GPT-2 (124M, autoregressive)}}\\
released model & \phantom{0}0.0 & \phantom{0}0.0 & \phantom{0}0.0 & \phantom{0}0.0 \\
+ task SFT & 45.1 & \phantom{0}0.8 & 92.7 & 28.0 \\
+ token GRPO/PPO & \cellcolor{blue!12}60.9 {\scriptsize(+15.8)} & \phantom{0}0.8 {\scriptsize(+0.0)} & \cellcolor{blue!12}92.8 {\scriptsize(+0.1)} & 46.7 {\scriptsize(+18.7)} \\
\midrule
\multicolumn{5}{c}{\emph{LLaDA-8B-Instruct (masked diffusion)}}\\
released model & \phantom{0}1.4 & \phantom{0}0.0 & \phantom{0}5.4 & \phantom{0}1.6 \\
+ task SFT & 12.1 & \phantom{0}0.5 & 61.4 & 15.9 \\
+ diffu-GRPO & 14.9 {\scriptsize(+2.8)} & \phantom{0}0.8 {\scriptsize(+0.3)} & 63.2 {\scriptsize(+1.8)} & 18.8 {\scriptsize(+2.9)} \\
\midrule
\multicolumn{5}{c}{\emph{ELF-B (105M, continuous flow)}}\\
released model & \phantom{0}0.0 & \phantom{0}0.0 & \phantom{0}0.0 & \phantom{0}0.0 \\
+ task SFT (base) & 42.8 & 64.0 & 68.2 & 44.0 \\
+ \methodGRPO{} & 42.6 {\scriptsize($-$0.2)} & 67.9 {\scriptsize(+3.9)} & 71.4 {\scriptsize(+3.2)} & 47.4 {\scriptsize(+3.4)} \\
+ \methodSVGScalar{} & 47.4 {\scriptsize(+4.6)} & 64.1 {\scriptsize(+0.1)} & 76.5 {\scriptsize(+8.3)} & 51.2 {\scriptsize(+7.2)} \\
\rowcolor{gray!15}\textbf{+ \methodSVGTwoD{}} & \textbf{49.7} {\scriptsize(+6.9)} & \cellcolor{blue!12}\textbf{68.7} {\scriptsize(+4.7)} & \textbf{77.5} {\scriptsize(+9.3)} & \cellcolor{blue!12}\textbf{58.2} {\scriptsize(+14.2)} \\
\bottomrule
\end{tabular}
\end{minipage}\hfill
\begin{minipage}[c]{0.37\linewidth}
\centering
\includegraphics[width=\linewidth]{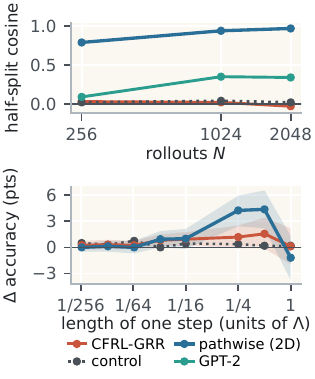}
\end{minipage}
\caption{\textbf{Main results.} \textbf{Left:} accuracy (\%) under the deployment sampler, with the gain over task SFT in parentheses; shaded: recommended; bold: best flow row; blue: best per task. \textbf{Right:} gradient estimates on Game24 base rollouts: half-split cosine (top) and accuracy after one step (bottom).}
\label{tab:main}
\end{table}

\paragraph{Main results.}\label{sec:exp_main}
Table~\ref{tab:main} compares the three \method{} variants with the two references. Every variant raises the accuracy of the base model over the four tasks; \methodGRPO{}, which needs no critic, provides the least improvement and \methodSVGTwoD{} is the best flow row on every task. Across architectures, GPT-2 leads on Countdown and K\&K, which require longer derivations, whereas the flow model leads on Game24 and Sudoku, the latter of which neither reference model learns. The 8B masked-diffusion model stays far below the flow model on every task, and \methodSVGTwoD{} gains more than diffu-GRPO on every task. Two patterns emerge. Score-function methods appear capped: \methodGRPO{} on the flow model and diffu-GRPO on the 8B masked model improve by only a few points on any task, regardless of model size, whereas the critic-based variants achieve substantially larger, task-dependent gains. Critic resolution also matters: the 1D critic does not consistently outperform \methodGRPO{}, whereas the 2D critic outperforms both methods across all four tasks. The gains are substantial: \methodSVGTwoD{} improves Game24 by \hl{14.2} points and reaches \hl{68.7} on Sudoku, making the 105M flow model the strongest model on both tasks.

\paragraph{Effect of the gradient estimator.}\label{sec:exp_estimator}
The right panels of Table~\ref{tab:main} test the two estimators at the same point, before any training moves the policy. We take the Game24 task-SFT model, sample $N$ complete rollouts under the deployment sampler and score them with the verifier. On these rollouts we form, at one fixed flow step (24 of 64), one gradient per rollout with respect to both the score-function gradient of \methodGRPO{} and the pathwise gradient of 2D critic. A control sets every advantage to 1, which gives an estimator whose expectation is exactly 0. GPT-2 is measured the same way on its own task-SFT model, with its token-level gradient. \emph{Top:} the $N$ rollouts are split at random into two halves, the gradients of each half are averaged, and the cosine between the two averages is plotted. Two halves of an estimator with a non-zero expectation agree more as $N$ grows, and the cosine rises towards 1; two halves of noise are unrelated, and the cosine stays near 0. \emph{Bottom:} from the base model we take one AdamW-normalized parameter update along each estimator's mean gradient over 1{,}024 rollouts; the unit $\Lambda$ is the distance in parameter space that the 2D-critic policy of Table~\ref{tab:main} travels from the base over its training. Each updated model is evaluated on the held-out prompts with the same random numbers as the base, and the change in accuracy is shown with a paired 95\% interval; a random direction of the same lengths is the control.

The score-function gradient of the flow model does not separate from the zero-mean control at any $N$ up to 2{,}048, whereas the pathwise gradient agrees with itself from 256 rollouts on. GPT-2's token gradient separates from the control at 1{,}024 rollouts: indicating that the limitation lies not only in the score-function estimator itself but also in the flow model's action space. The one-step test says the same in accuracy: a single step along the pathwise direction, at a quarter or a half of $\Lambda$, raises held-out accuracy by about 4 points, whereas the score-function step and the random step stay within about 1 point of the base at every length. The two panels tell a consistent story. Relative to its mean, the flow model's score-function gradient has high variance that independent estimates disagree and a single update has no measurable effect, even at practical rollout budgets. By contrast, pathwise estimates agree within a few hundred rollouts, and one update recovers a substantial fraction of the full-training gain. Because this difference appears before training, it reflects the estimators rather than their training recipes.

\begin{table}[!t]
\centering\small
\begin{minipage}[c]{0.60\linewidth}
\centering
\setlength{\tabcolsep}{3pt}
\resizebox{\linewidth}{!}{%
\begin{tabular}{@{}lcccc@{}}
\toprule
\textbf{Critic} & \textbf{Pearson} & \textbf{pick@$V_\phi$} & \textbf{max.} & \textbf{final} \\
\midrule
\multicolumn{5}{c}{\emph{critic input and initialisation}}\\
\rowcolor{gray!15}\textbf{fine-tuned copy, reads $(y_k,x_{t_{k-1}})$} & \CDa{1} & \CDa{3} & \CDa{5} & \CDa{6} \\
\quad without $x_{t_{k-1}}$ in the input & \CDb{1} & \CDb{3} & \CDb{5} & \CDb{6} \\
\quad copy not fine-tuned & \CDc{1} & \CDc{3} & \CDc{5} & \CDc{6} \\
\quad random initialisation & \CDd{1} & \CDd{3} & 37.9 & \phantom{0}4.1 \\
\midrule
\multicolumn{5}{c}{\emph{same critic, trained differently in RL}}\\
\quad copy also trained in RL & \CDa{1} & \CDa{3} & 49.7 & 49.3 \\
\quad head not refit in RL & \CDa{1} & \CDa{3} & 49.3 & \phantom{0}0.5 \\
\bottomrule
\end{tabular}}
\end{minipage}\hfill
\begin{minipage}[c]{0.385\linewidth}
\centering
\includegraphics[width=\linewidth]{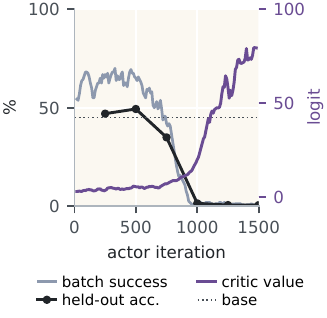}
\end{minipage}
\caption{\textbf{Critic design.} \textbf{Left:} the 2D critic on Game24 (shaded) and variants. Offline: correlation of the value with the outcome, and success of its top-ranked rollout (chance $0.55$). Online: RL accuracy (\%), max.\ and final. \textbf{Right:} the last row over training.}
\label{tab:critic_design}
\end{table}

\paragraph{Effect of the critic design.}\label{sec:exp_critic}
Table~\ref{tab:critic_design} varies one critic component at a time. Removing the estimate $x_{t_{k-1}}$or omitting backbone fine-tuning reduces offline outcome prediction and causes the policy to improve briefly before collapsing; random initialization never surpasses the base. Thus, the outcome is encoded in $x_{t_{k-1}}$ and requires pretrained denoiser features to interpret. Updating the backbone during RL provides no benefit, so joint training refits only the head. Without refit, the actor exploits the critic: its predicted value approaches certainty while the same batch's true success falls to zero.

\begin{figure}[t]
\centering
\includegraphics[width=0.85\linewidth]{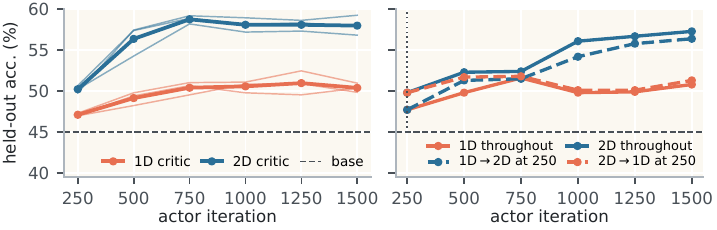}
\caption{\textbf{1D versus 2D critic.} Game24 accuracy (\%) over training. \textbf{Left:} 1D- and 2D-critic runs (thin) and their means (thick). \textbf{Right:} critics exchanged at iteration 250 (dashed) or not (solid); colour: the critic in charge afterwards.}
\label{fig:dyn}
\end{figure}

\paragraph{1D versus 2D critic.}\label{sec:exp_2d}
Figure~\ref{fig:dyn} follows the 1D- and 2D-critic runs of Table~\ref{tab:main} on Game24; the two recipes differ only in the critic's head, one reading of the pooled features against one reading per position. \emph{Left:} the 2D critic rises faster from the first checkpoint on and plateaus well above the 1D critic, which levels off early. \emph{Right:} to tell whether this is the critic's doing or a better start, the two critics are exchanged at iteration 250, while the two policies are still close. Each exchanged run ends where the critic now in charge takes it, whichever policy it inherited: the run that switches to the 2D critic climbs to the 2D level, the run that switches to the 1D critic falls to the 1D level. The advantage is therefore made during joint training, by the gradient the critic hands the actor, not by a head start; one reading per position steers the policy further than one reading of the whole state.

\paragraph{Effect of the objective, the pooling and the sampler.}\label{sec:ablations}
Table~\ref{tab:needs} takes the full 2D-critic recipe on Game24 and changes one choice at a time, keeping the base, the critic, the window and the budget fixed; each row reports how much the change costs at the best checkpoint and at the last one. Each choice matters. Ascending the success probability instead of its log-odds lowers both numbers, and the final accuracy more than the best: the probability's gradient fades on states whose outcome is nearly decided, which is exactly where the log-odds still amplify it, so the policy stops learning from its most informative states and levels off earlier. Pooling the positions by their minimum instead of their mean costs less: the gradient of a minimum reaches only the single lowest position of a state, so each update teaches fewer positions, but the positions it teaches are the right ones. Training under the cheap 16-step sampler and evaluating under the deployment sampler costs the most, two thirds of the full gain: the critic's value is defined by the sampler that produces the states, and a policy improved for one sampler is not thereby improved for another. This is why actor and critic are trained under the sampler used at deployment, and why carrying the gain to cheaper samplers needs the loop of the next paragraph rather than a change of training sampler.

\begin{table}[!t]
\centering\small
\caption{\textbf{Effect of the objective, the pooling and the sampler} on Game24: change in accuracy (\%) at the best and the final checkpoint relative to the full recipe; blue: the largest loss.}
\label{tab:needs}
\setlength{\tabcolsep}{5pt}
\begin{tabular*}{\linewidth}{@{\extracolsep{\fill}}llcc@{}}
\toprule
\textbf{Choice} & \textbf{Variant} & \textbf{$\Delta$ max.} & \textbf{$\Delta$ final} \\
\midrule
objective & actor ascends $V_\phi$ instead of $\mathrm{logit}(V_\phi)$ & $-4.0$ & $-6.1$ \\
pooling & actor ascends the minimum over positions instead of the mean & $-3.1$ & $-2.5$ \\
sampler & trained under the cheap sampler, read at deployment & \cellcolor{blue!12}$-9.4$ & \cellcolor{blue!12}$-9.4$ \\
\bottomrule
\end{tabular*}
\end{table}

\paragraph{Effect of the amortization loop.}\label{sec:exp_loop}
Table~\ref{tab:loop} starts from the 2D-critic checkpoint of Table~\ref{tab:main} on Countdown and K\&K and applies the amortization loop (Algorithm~\ref{alg:loop}) with a pool of 50k unlabelled problems: each round samples the policy on the pool, keeps the responses the verifier accepts, and fine-tunes the model on them with ELF's supervised loss, for 4 rounds on Countdown and 5 on K\&K. The left table reads each model under the cheap 16-step sampler and the 64-step deployment sampler. RL alone improves the deployment sampler, by about 8 points on Countdown and 12 on K\&K, but on Countdown it leaves the cheap sampler where the base was. The loop moves the cheap sampler most: on both tasks it ends above the accuracy the base had under the deployment sampler, a gain over the loop start of about 20 points on Countdown and 25 on K\&K, while the deployment sampler holds on Countdown and keeps rising on K\&K to \hl{99.1}, above every model of Table~\ref{tab:main}. The reason is where the two stages write their gains: RL changes the denoiser only along the transitions of the sampler it trains under, whereas ELF's supervised loss trains the denoiser at every noise level, so fine-tuning on the policy's own verified samples spreads what RL taught the policy over the whole denoiser, and every sampler reads it.

\paragraph{Effect of the loop across sampler cost.}
The right panel of Table~\ref{tab:loop} reads the same three models under four samplers ordered by cost: 16 and 32 stochastic steps, 64 deterministic ODE steps, and the 64-step deployment sampler. The base and the RL policy trace steep curves: accuracy climbs with the number of steps, and on Countdown RL's gain sits almost entirely at the expensive end, the sampler it was trained under, while on K\&K RL lifts every sampler by a similar amount. After the loop both curves are nearly flat: accuracy varies by 5 to 6 points across the four samplers. The deployment sampler is still the best, but the gap between the cheapest and the most expensive shrinks from about 20 points to about 6 on Countdown and from about 10 to about 5 on K\&K. In practice this means the 16-step sampler delivers nearly the deployment accuracy at a quarter of the denoiser calls: the loop turns a gain that belonged to one sampler into a property of the model.

\begin{table}[!t]
\centering\small
\begin{minipage}[c]{0.56\linewidth}
\centering\footnotesize
\setlength{\tabcolsep}{2pt}
\begin{tabular}{@{}l*{4}{>{\centering\arraybackslash}p{0.9cm}}@{}}
\toprule
\textbf{Model} & \multicolumn{2}{c}{\textbf{Countdown}} & \multicolumn{2}{c}{\textbf{K\&K}} \\
\cmidrule{2-3}\cmidrule{4-5}
 & \textbf{cheap} & \textbf{deploy} & \textbf{cheap} & \textbf{deploy} \\
\midrule
base & 23.8 & 42.8 & 58.6 & 68.2 \\
2D-critic ckpt (loop start) & 24.3 & 50.5 & 68.8 & 79.8 \\
\rowcolor{gray!15}\textbf{+ amortization loop} & \textbf{45.6} & 51.4 & \textbf{93.8} & \textbf{99.1} \\
\bottomrule
\end{tabular}
\end{minipage}\hfill
\begin{minipage}[c]{0.42\linewidth}
\centering
\includegraphics[width=\linewidth]{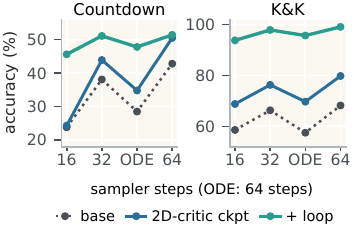}
\end{minipage}
\caption{\textbf{Amortization loop.} \textbf{Left:} accuracy (\%) under the cheap and the deployment sampler before and after the loop; bold: over 1.0 point above the loop start; shaded: the full pipeline. \textbf{Right:} the same three models across sampler cost.}
\label{tab:loop}
\end{table}

\section{Conclusion and Limitations}
We propose \method{}, RL post-training for continuous flow language models in three variants and an amortisation loop. \methodGRPO{} is the score-function estimator on the chain's exact density; \methodSVGScalar{} and \methodSVGTwoD{} are stochastic value gradients with one value per state, or per state and token position. \methodGRPO{} gains little: one reward bit is spread over the chain's noise coordinates, and its gradient agrees with itself no better than a zero-mean control, whereas GPT-2's gradient, with one score term per token decision, does. Differentiating the chain through a critic removes the noise factor. The critic must read the denoiser's estimate through a copy of the pre-trained denoiser and be refit after every actor step, or it is exploited. With one value per position the policy rises faster and further than with a scalar value, and exchanging the critics mid-training shows that the critic in charge decides. The loop carries the gain into a cheap chain.

\paragraph{Limitations.}
Our results cover one 105M flow model and four synthetic reasoning tasks with exact verifiers; larger models, natural-language tasks and learned rewards remain untested. The RL gain is tied to the sampler used in training, and carrying it to cheaper samplers requires the amortization loop and a pool of unlabelled problems. Finally, the autoregressive reference still leads on two of the four tasks, so the advantage of a flow model with RL is task-dependent rather than uniform.

\subsubsection*{Reproducibility statement}
Code, configurations and job scripts for every run in this paper will be released; all datasets are generated programmatically or drawn from public collections (Appendix~\ref{app:datasets}) and the exact splits, seeds and verifier implementations are included. Appendix~\ref{app:datasets} defines the samplers, Appendix~\ref{app:details} lists every hyperparameter, the evaluation protocol and the compute used; Appendix~\ref{app:results} reports the full trajectories and the supporting results behind the numbers in the main text.

\subsubsection*{AI use statement}
In this work, we used generative AI tools for writing assistance (editing prose), for writing and debugging experiment code. We have not used generative AI tools for generating research ideas that are presented as contributions, nor for producing any experimental result, number or figure that was not computed by the released code. We have reviewed all AI-assisted work: all code was tested against the reported runs, all citations were verified against the cited sources, and all claims were checked against the experiment logs. The authors take full responsibility for all content of this paper.

\bibliography{refs}
\bibliographystyle{iclr2027_conference}

\appendix
\raggedbottom 
\section{Datasets and verifiers}
\label{app:datasets}
All tasks are generated programmatically; every prompt has a program verifier that decides correctness of a model output without reference to the gold answer, so rewards are exact and label-free.

\paragraph{Countdown.}
Following the Countdown formulation of \citet{gandhi2024sos}, a prompt is four or five integers and a target; the model emits a derivation as a sequence of binary operations (\texttt{12-5=7 , 4*7=28 , 28-3=25} for the numbers 12, 5, 4, 3 and the target 25). The verifier parses the derivation, checks that every operation is arithmetically correct, that each input number is used at most once, and that the last result equals the target. The supervised set is 500k examples, one gold derivation each; the loop's pool is a disjoint set of 50k unlabelled problems; evaluation uses 2{,}000 held-out problems. Sequence length is 72 embedding positions (16 for the condition).

\paragraph{Sudoku.}
Puzzles are drawn from a public collection of rated Sudoku puzzles (with solutions); the model reads the 81-cell puzzle (blanks as \texttt{*}) and emits all 81 cells. The verifier checks that the output is a valid grid consistent with the givens; since solutions are unique this equals exact match. The supervised set is 400k puzzles, evaluation 2{,}000 held-out puzzles; the ``fresh puzzle'' pool for the loop is 50k further puzzles with no overlap with training, validation or test. Sequence length is 164 (82 condition, 82 target).

\paragraph{Knights-and-Knaves.}
Following \citet{xie2024kk}, each puzzle has $n\in\{4,5,6\}$ characters who each make one statement (``A: C knave'', ``B: A same C'', ``C: D diff B''); the model outputs an assignment (``A knight B knave \dots''). Our generator enumerates assignments and keeps only puzzles with a unique solution; the verifier re-parses the statements and checks that the assignment satisfies every one. Training uses 300k puzzles; evaluation 2{,}000; the loop's first round samples the 50k training puzzles without their labels, and every later round a fresh 50k pool disjoint from training, validation and each other. The supervised model reaches 96.1/93.3/99.4/99.7 (ODE-64/$\gamma4$@16/$\gamma28$@32/$\gamma56$@64) at epoch 8, so the RL base is the epoch-4 checkpoint (57.5/58.6/66.3/68.2) and the epoch-8 model is the supervised oracle. Sequence length is 96 (56 condition).

\paragraph{Game24.}
Following the Game of 24 of \citet{yao2023tot}, reach 24 from four numbers in $\{1,\dots,13\}$ with each number used once. Only 1{,}345 unordered number sets are solvable, so we split by set: the training set contains 13{,}150 orderings of 1{,}076 sets with one gold derivation each, the loop pool a further 4{,}693 orderings of the training sets, and the evaluation set 1{,}601 orderings of the 269 remaining sets, which the model never sees in any order. The verifier is the Countdown verifier with target 24. Sequence length is 72 (16 condition). The supervised model is initialised from the Countdown model and trained for 40 epochs; held-out accuracy peaks at epoch 20 (36.2/35.6/43.5/44.0 at ODE-64/$\gamma4$@16/$\gamma28$@32/$\gamma56$@64) and falls to 30.4/29.0/32.4/34.1 by epoch 40, so the epoch-20 checkpoint is the RL base and there is no supervised oracle above it.

\paragraph{Samplers.}
ODE-64 is the 64-step deterministic sampler of \citet{hu2026elf}. $\gamma56$@64 denotes the stochastic sampler of~\eqref{eq:sde} with $K{=}64$ uniform steps and noise re-injection scale (churn) $\gamma{=}56$, and likewise for the other values; $\gamma56$@64, $\gamma28$@32 and $\gamma14$@16 all have per-step retention $\alpha\approx0.125$, which we found to be the best sampling configuration independently of $K$ (more steps still help at fixed $\alpha$); $\gamma4$@16 is a cheap low-churn configuration. All evaluations use classifier-free guidance 2 and self-conditioning guidance 1 as in the ELF release. Throughout, the \emph{deployment sampler} is $\gamma56$@64 ($\gamma28$@32 on Sudoku) and the \emph{cheap sampler} is $\gamma4$@16 ($\gamma14$@16 on Sudoku).

\section{Derivations}
\label{app:derivations}

\subsection{The stochastic sampler and its transition density}
\label{app:kernels}
The sampler of~(\ref{eq:sde}) is the stochastic sampler released with ELF \citep{hu2026elf}. At every step it re-noises the current state, queries the denoiser at the re-noised state and at the correspondingly lowered time, and moves part of the way from the re-noised state towards the estimate; the coefficients $\lambda_k$, $\eta_k$ and $\psi_k$ of~(\ref{eq:sde}) follow from these three operations, as derived below. The state of the MDP, $s_k=(\rvz_{t_k},\rvx_{t_{k-1}})$, is the pair the denoiser reads at its $(k{+}1)$-th call: the re-noised embeddings and the previous estimate carried by self-conditioning.

\paragraph{Derivation from the released implementation.}
The official implementation of ELF uses the following stochastic sampler with churn $\gamma$
\citep{hu2026elf}. The implementation is slightly different from Algorithm 6 in the original
paper. Here we provide the derivation based on the code, and the notations are
mostly consistent with the original paper.

At step $k$, let the current time be $t_{k-1}$ and the step size be $h$, so that
$t_{k} = t_{k-1} + h$. Let $\gamma \ge 0$ be the noise re-injection scale, and
$t_{\text{eps}}=0.05$ is the lower bound used in the velocity conversion.
Define
\begin{equation}
  \alpha = \operatorname{clip}\!\left(1 - \gamma h,\; 0,\; 1\right),
  \quad
  \tilde{t}_{k-1} = \alpha t_{k-1},
  \quad
  D = \max\!\left(1 - \tilde{t}_{k-1},\; t_{\text{eps}}\right)
    = \max\!\left(1 - \alpha t_{k-1},\; t_{\text{eps}}\right).
\end{equation}

With $\epsilon \sim \mathcal{N}(0, c_\sigma^2I)$, one step performs
\begin{align}
  \tilde z_{k-1} &= \alpha\, z_{k-1} + (1 - \alpha)\, \epsilon,
    \label{eq:renoise} \\
  x_{k-1} &= \mathrm{d}_\theta\!\left(\tilde z_{k-1},\, \tilde t_{k-1}, x_{k-2}, c\right),
    \label{eq:predict} \\
  v_{k-1} &= \frac{x_{k-1} - \tilde z_{k-1}}{D},
    \label{eq:velocity} \\
  z_k &= \tilde z_{k-1} + \left(t_k - \tilde{t}_{k-1} \right) v_{k-1}.
    \label{eq:update} \\
  \tilde{z}_k &= \alpha\, z_{k} + (1 - \alpha)\, \epsilon.
    \label{eq:renoise2}
\end{align}
Note that $x_{k-1}$ is the prediction at the re-noised input
$(\tilde z_{k-1}, \tilde t)$, not at $(z_{k-1}, t)$. At each step,
the code record $(z_k, x_{k-1})$ at each step. At $k = 1$, we
use $z_{0} = \varepsilon$ and $x_{-1} = 0$.

In our derivation, we use $(\tilde{z}_{k}, x_{k-1})$ as the state, which is $(\rvz_{t_k}, \rvx_{t_{k-1}})$ in our notation. We write:
\begin{align}
\tilde{z}_k = \lambda_k \tilde{z}_{k-1} + \eta_k x_{k-1} + \psi_k \epsilon
\end{align}
with
\begin{align}
\lambda_k &= \alpha \left(1 - \frac{t + h - \tilde t}{D}\right), \quad
\eta_k = \frac{\alpha (t + h - \tilde t)}{D}, \quad
\psi_k = 1 - \alpha,
\end{align}
for $k =1, \ldots, K - 1$. When $k = K$, we set $\alpha=1$ and eliminate the re-noising step,
\begin{align}
\lambda_K &= \left(1 - \frac{t + h - \tilde t}{D}\right), \quad
\eta_K = \frac{(t + h - \tilde t)}{D}, \quad
\psi_K = 0,
\end{align}
so that $\tilde{z}_K = z_K$, matching the ELF implementation.

To derive these, substitute \eqref{eq:velocity} into \eqref{eq:update} and group the terms by $\tilde z_{k-1}$ and $x_{k-1}$:
\begin{align*}
z_k &= \tilde z_{k-1} + \frac{t + h - \tilde t}{D}\left(x_{k-1} - \tilde z_{k-1}\right) \\
    &= \left(1 - \frac{t + h - \tilde t}{D}\right)\tilde z_{k-1} + \frac{t + h - \tilde t}{D}\, x_{k-1}.
\end{align*}
Substituting this into \eqref{eq:renoise2} gives
\begin{align*}
\tilde z_k = \alpha\left(1 - \frac{t + h - \tilde t}{D}\right)\tilde z_{k-1} + \frac{\alpha\,(t + h - \tilde t)}{D}\, x_{k-1} + (1 - \alpha)\,\epsilon,
\end{align*}
from which $\lambda_k$, $\eta_k$ and $\psi_k$ are read off. The noise enters only through the re-noising after the update, so $\psi_k$ carries no factor from the step.

Two consequences are used in the main text. First, the time fed to the denoiser together with $\rvz_{t_k}$ is the lowered time $\tilde t_k=\alpha\,t_k$ of the re-noised state, which the main text writes as $t_k$ in~(\ref{eq:denoiser}) for brevity. Second, given $s_{k-1}$, the next embeddings are Gaussian: for $k=1,\dots,K-1$ and every generated position $i\in\mathcal G$,
\begin{equation}
p_\theta\big(\rvz_{t_k}[i]\,\big|\,s_{k-1}\big)=\mathcal N\!\big(\rvz_{t_k}[i];\ \lambda_k\rvz_{t_{k-1}}[i]+\eta_k\,\rvx_{t_{k-1}}[i],\ \psi_k^2c_\sigma^2I_d\big),\qquad \rvx_{t_{k-1}}=\mathrm d_\theta(s_{k-1}),
\label{eq:kernel}
\end{equation}
independently across the generated positions $i\in\mathcal G$ (the response positions), while the prompt positions are held fixed; the last step has $\psi_K=0$ and is deterministic, and the decoder gives $\pi_\theta(a_K\mid s_K)=\prod_i\mathrm{softmax}(W\rvz_{t_K}[i])_{w_i}$. The mean of~(\ref{eq:kernel}) depends on $\theta$ only through the estimate $\mathrm d_\theta(s_{k-1})$, so the $\theta$-dependent part of $\log\pi_\theta(a_k\mid s_k)$ in~(\ref{eq:grpo_loss}) is the log-density~(\ref{eq:kernel}) of the next state, and the trajectory log-likelihood is the sum of these terms and the decoder term, with no bound involved. We verified this on CPU: recomputing the log-likelihood of a stored trajectory under the parameters that produced it agrees with the value recorded at rollout to relative error $7\times10^{-5}$ (denoising steps) and $10^{-3}$ (decoder). One caveat matters in practice: $\gamma\Delta_k=1$ makes $\alpha=0$, in which case the state is replaced by fresh noise and every parameter gradient of that step vanishes, so we require $\alpha>0$.

\subsection{Score-function and pathwise gradients of one transition}
\label{app:variance}
From~(\ref{eq:kernel}), the score of transition $k$ is
\begin{equation}
\nabla_\theta\log p_\theta(\rvz_{t_k}\mid s_{k-1})=\frac{\eta_k}{\psi_k^2c_\sigma^2}\sum_{i}\epsilon_{k}[i]^{\top}\,\frac{\partial\,\mathrm d_\theta(s_{k-1})[i]}{\partial\theta},
\label{eq:sf}
\end{equation}
where $\epsilon_k\sim\mathcal N(0,c_\sigma^2I)$ is the noise injected at that step: the advantage multiplies the injected noise before it passes through the denoiser's Jacobian. Summed over the steps, (\ref{eq:sf}) is the transition part of the gradient of~(\ref{eq:grpo_loss}). A trajectory therefore contributes $(K-1)\cdot|\mathcal G|\cdot d$ terms, each proportional to an independent Gaussian coordinate and weighted by an advantage that depends on the trajectory only through its final reward. A coordinate adds to the expected gradient only through the covariance of the reward with that coordinate's noise, but it adds to the variance regardless; with $N$ trajectories per update the signal grows like $N$ times these covariances and the noise like $\sqrt{N(K-1)|\mathcal G|d}$. In an autoregressive model the analogous sum has one term per token and every term is a decision the reward depends on; in the flow model the covariances are, as far as we can measure, undetectable (the half-split cosine of Table~\ref{tab:main} stays at the level of a zero-mean control up to $2{,}048$ rollouts), which is the origin of the gap between the two estimators in Table~\ref{tab:main}. A finer advantage, per step or per position, changes which terms receive weight but leaves the factor $\epsilon_k/(\psi_kc_\sigma)$ in place.

The pathwise estimator of $\mathcal J_{\rm SVG}$ uses the same reparameterisation, $\rvz_{t_k}=\lambda_k\rvz_{t_{k-1}}+\eta_k\,\mathrm d_\theta(s_{k-1})+\psi_k\epsilon_k$ and $\rvx_{t_{k-1}}=\mathrm d_\theta(s_{k-1})$, and differentiates the critic through it with $s_{k-1}$ and $\epsilon_k$ held fixed:
\begin{equation}
\nabla_\theta V_\phi(s_k)=\Big[\eta_k\,\nabla_{\rvz}V_\phi(s_k)+\nabla_{\rvx}V_\phi(s_k)\Big]\frac{\partial\,\mathrm d_\theta(s_{k-1})}{\partial\theta},
\label{eq:pwgrad}
\end{equation}
the first channel through the re-noised embeddings, the second through the estimate that self-conditioning feeds to the next call. No noise factor appears: the weight on each coordinate is the critic's derivative there, which vanishes on every coordinate the predicted outcome does not depend on. One random $k$ per update suffices (next section), so the sampler need not be differentiated end to end and memory is constant in $K$.

\subsection{Why differentiating one transition suffices}
\label{app:pw_derivation}
Fix a prompt and write $V^\star_k(s)=\mathbb E[R(s_{K+1})\mid s_k=s]$ as in Section~\ref{sec:method}. Since the objective~(\ref{eq:objective}) is $J(\theta)=\mathbb E[V^\star_k(s_k)]$ for every $k$ by the tower property, and $s_k$ is produced from $s_{k-1}$ by the call $\mathrm d_\theta(s_{k-1})$ and the injected noise, the value satisfies the one-step recursion
\begin{equation}
V^\star_{k-1}(s)=\mathbb E_{\epsilon_k}\big[V^\star_k(s_k)\big],\qquad s_k=\big(\lambda_k\rvz_{t_{k-1}}+\eta_k\,\mathrm d_\theta(s)+\psi_k\epsilon_k,\ \mathrm d_\theta(s)\big),\qquad s=(\rvz_{t_{k-1}},\rvx_{t_{k-2}}),
\label{eq:bellman}
\end{equation}
and $V^\star_k$ depends on $\theta$ only through the calls at $s_k$ and later. The parameters enter $J$ once per denoiser call. Writing $\nabla^{(k)}_\theta$ for the gradient taken through the call at $s_{k-1}$ alone, with every other call held fixed, the chain rule gives
\begin{equation}
\nabla_\theta J=\sum_{k=1}^{K+1}\nabla^{(k)}_\theta J,\qquad \nabla^{(k)}_\theta J=\mathbb E\big[\nabla^{(k)}_\theta V^\star_k(s_k)\big]\quad(1\le k\le K-1),
\label{eq:sumover}
\end{equation}
where the second equality exchanges derivative and expectation, and $\nabla^{(k)}_\theta V^\star_k(s_k)$ is~(\ref{eq:pwgrad}) with $V^\star_k$ in place of $V_\phi$. The exchange needs $V^\star_k$ to be locally Lipschitz in $s_k$ with an integrable gradient. For $k\le K-1$ this holds in the first argument because, by~(\ref{eq:bellman}), $V^\star_k$ is a Gaussian average of $V^\star_{k+1}$ over the noise injected at step $k{+}1$; for the second argument, the estimate carried by self-conditioning, we assume the same regularity. It fails at the last call: $V^\star_K$ is the reward of the decoded response, an $\arg\max$ followed by a program, hence piecewise constant with zero gradient almost everywhere, so a pathwise estimator sees no signal through the final decode. This is why the pathwise route needs a value on \emph{intermediate} states rather than the reward on the final one, and why $\mathcal J_{\rm SVG}$ is written with a critic at $s_k$. Each term of~(\ref{eq:sumover}) is the one-step value gradient of SVG \citep{heess2015svg} with the sampler as the transition model; drawing one $k$ at random and applying~(\ref{eq:pwgrad}) is an unbiased estimate of the window's share of the sum, up to the constant $|\mathcal W|$, when $V_\phi=V^\star_k$.

The first channel of~(\ref{eq:pwgrad}) and the score-function term~(\ref{eq:sf}) estimate the same quantity. By Gaussian integration by parts, $\mathbb E[\psi(\mu+\sigma\epsilon)\,\epsilon]=\sigma\,\mathbb E[\nabla\psi(\mu+\sigma\epsilon)]$ for standard Gaussian $\epsilon$ and locally Lipschitz $\psi$ with integrable gradient; applied to $\psi=V^\star_k(\cdot,\rvx_{t_{k-1}})$ it turns the factor $V^\star_k(s_k)\,\epsilon_k/(\psi_kc_\sigma)$ of the score-function form into the factor $\nabla_{\rvz}V^\star_k(s_k)$ of the pathwise form, conditionally on $s_{k-1}$. The two have the same expectation and differ only in variance, coordinate by coordinate: the score-function factor at a coordinate has variance of order $\mathbb E[(V^\star_k)^2]/(\psi_kc_\sigma)^2$ whether or not the coordinate affects the reward, whereas the pathwise factor is identically zero on every coordinate the value does not depend on. The second channel of~(\ref{eq:pwgrad}) has no counterpart in~(\ref{eq:grpo_loss}), which holds the stored estimates fixed. With a learned critic in place of $V^\star_k$ the estimate is biased by the difference of the two slopes on the states the current policy visits; both the visited states and $V^\star_k$ itself move with $\theta$, which is why the critic is refit on the current policy's own rollouts after every policy update (Section~\ref{sec:jpc_2d}).

\subsection{Head before or after the mean}
\label{app:additive}
The 1D critic~(\ref{eq:value1}) applies the head to the mean of the position features, the 2D critic~(\ref{eq:value2}) applies it to every position and pools the readings. The following statement makes precise the bottleneck of Section~\ref{sec:jpc_2d}: with the features held fixed, a head on the mean can weigh the positions only linearly.
\begin{proposition}
\label{prop:additive}
Let $m\ge2$, $\rvh=(h_1,\dots,h_m)$ with $h_i\in\mathbb R^{d'}$, $\bar h=m^{-1}\sum_ih_i$, and let $g_1,g_2:\mathbb R^{d'}\to\mathbb R$ be differentiable.
\textup{(a)} If $g_1(\bar h)=m^{-1}\sum_ig_2(h_i)$ for all $\rvh$, then $g_1=g_2$ and both are affine.
\textup{(b)} For $U=g_1(\bar h)$, $\partial U/\partial h_j=m^{-1}\nabla g_1(\bar h)$ for every $j$; for $U=m^{-1}\sum_jg_2(h_j)$, $\partial U/\partial h_j=m^{-1}\nabla g_2(h_j)$.
\end{proposition}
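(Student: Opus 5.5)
For part (a), I would begin by showing $g_1=g_2$ through a diagonal choice. Setting $h_1=\dots=h_m=u$ gives $\bar h=u$, so the identity reads $g_1(u)=m^{-1}\cdot m\,g_2(u)=g_2(u)$ for every $u\in\mathbb R^{d'}$. Writing $g$ for the common function, the hypothesis becomes the Jensen-type functional equation
\begin{equation*}
g\Bigl(\tfrac1m\textstyle\sum_i h_i\Bigr)=\tfrac1m\textstyle\sum_i g(h_i)\qquad\text{for all }h_1,\dots,h_m .
\end{equation*}

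The key step is to differentiate this identity with respect to a single position. Taking the gradient in $h_j$ on both sides gives $m^{-1}\nabla g(\bar h)=m^{-1}\nabla g(h_j)$. This uses only differentiability of $g$ and the chain rule through the linear map $\rvh\mapsto\bar h$. Since $m\ge2$, I can fix $h_j=u$ and choose the remaining positions freely to make $\bar h$ equal an arbitrary $v$, namely $h_i=(mv-u)/(m-1)$ for $i\ne j$. This yields $\nabla g(u)=\nabla g(v)$ for all $u,v$, so $\nabla g$ is a constant vector $a$. Integrating along a segment then gives $g(u)=g(0)+a^\top u$, so $g$ is affine. The assumption $m\ge2$ is used exactly here: with $m=1$ the identity is vacuous and no constraint follows.

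Part (b) is a direct chain-rule computation. Since $\partial\bar h/\partial h_j=m^{-1}I$, differentiating $U=g_1(\bar h)$ gives $m^{-1}\nabla g_1(\bar h)$, and this is the same vector for every $j$. For the pooled form, only the $j$-th summand depends on $h_j$, so the gradient is $m^{-1}\nabla g_2(h_j)$.

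I expect no real obstacle. The only step that needs care is justifying that $\bar h$ can be made arbitrary while $h_j$ is held fixed, which is the explicit construction above.
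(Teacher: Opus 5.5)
Your proof is correct and follows essentially the paper's argument: differentiate the identity with respect to one position, use a second position ($m\ge2$) to make $\bar h$ arbitrary while that position is held fixed, so the gradient is constant, set $h_1=\dots=h_m=u$ to get $g_1=g_2$, and obtain (b) from the chain rule. The differences are only in presentation: you identify $g_1=g_2$ first rather than last, and you write out the choice of the other positions and the integration step explicitly.
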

\begin{proof}
(a) Differentiating the identity with respect to $h_1$ gives $\nabla g_1(\bar h)=\nabla g_2(h_1)$ for all $\rvh$. Fix $h_1$ and vary $h_2$: $\bar h$ ranges over all of $\mathbb R^{d'}$ while the right-hand side is fixed, so $\nabla g_1$ is constant and $g_1$ is affine. Setting $h_1=\dots=h_m=u$ in the identity gives $g_2(u)=g_1(u)$. (b) is the chain rule.
\end{proof}
Part (b) is the gradient statement: through a head on the mean, every position's feature is pushed along one direction, the head's slope at $\bar h$; through a head per position, position $j$ is pushed along the head's slope at its own reading. The proposition holds the features fixed; in the critic they are computed by a transformer that attends over the whole state, so neither order limits what the critic can represent as a function of the state. The two critics also share their population target: in expectation over the states at step $k$ and their outcomes, both terms of~(\ref{eq:critic2d_loss}) are minimised by the position-independent field $f_\phi(h_i)=\mathrm{logit}\,V^\star_k(s_k)$, whose pooled value is the minimiser of~(\ref{eq:critic_loss}) as well. Their difference is therefore one of parametrisation and of behaviour under joint training, which is how Table~\ref{tab:main} and Figure~\ref{fig:dyn} report it.

\subsection{Token commitment along the trajectory}
\label{app:commit}
For each generated position we record the first step at which the estimate $\rvx_{t_k}$ decodes to the final token and never changes afterwards. On Countdown, 86\% of positions commit at the first denoiser call (median the first call, 90th percentile the third), identically for $\gamma\in\{4,8,14\}$; the remaining contested positions commit uniformly along the trajectory. Most word decisions are therefore made in the first forward pass and carried by the self-conditioned estimate. This statistic describes the decoded tokens, not the coordinates' influence on the reward (Appendix~\ref{app:variance}).

\section{Method details}
\label{app:method_details}
This appendix holds the parts of the method that the main text states briefly: the choices behind the pathwise estimator and the critic (Sections~\ref{sec:method} and~\ref{sec:jpc_2d}), the design of the joint training (Algorithm~\ref{alg:jpc}), and the amortization loop in full. Notation is that of the main text: $s_k=(\rvz_{t_k},\rvx_{t_{k-1}})$, the denoiser $\mathrm d_\theta$, the sampler coefficients $\lambda_k,\eta_k,\psi_k$ of~(\ref{eq:sde}), the reward $R(s_{K+1})$ of the decoded response, and $\mathcal G$ for the set of generated positions.

\subsection{The pathwise estimator and the critic}
\label{app:pw_details}
\paragraph{One transition.}
The estimator of $\mathcal J_{\rm SVG}$ draws one transition $k$ uniformly from a window $\mathcal W=\{k_{\rm lo},\dots,k_{\rm hi}\}$ of the sampler, with $1\le k_{\rm lo}$ and $k_{\rm hi}\le K-1$, and differentiates the critic through that transition alone:
\begin{equation*}
\hat g_{\rm PW}=\nabla_\theta V_\phi(s_k),\qquad k\sim\mathcal U(\mathcal W),\qquad s_k=\big(\lambda_k\rvz_{t_{k-1}}+\eta_k\,\mathrm d_\theta(s_{k-1})+\psi_k\epsilon_k,\ \mathrm d_\theta(s_{k-1})\big),
\end{equation*}
with $s_{k-1}$ and $\epsilon_k$ held fixed (eq.~\ref{eq:pwgrad}). It is computed by rolling the sampler to $s_{k-1}$ without gradients and taking the denoiser call at $s_{k-1}$ with gradients, so its memory is that of the backward passes through the actor's call (two evaluations under classifier-free guidance) and through the critic, independent of $K$: the dependence of the reward on $s_k$ through the remaining steps is carried by the value in expectation rather than by back-propagation through them (Appendix~\ref{app:pw_derivation}). The window is the middle of the trajectory, $\mathcal W=[K/8,\,5K/8]$ on every task: it excludes the first steps, where the state is noise and the value flat, and the last, where the state is committed and the reward of the decoded response is piecewise constant; its ends are hyperparameters, not derived. With a learned critic the update is biased by the difference between the critic's slope and the true value's slope on the states the current policy visits, which is why the critic is refit on those states after every actor step (Algorithm~\ref{alg:jpc}). The update has no baseline, advantage or trust region: a baseline is a function of $s_{k-1}$, which the differentiated call holds fixed, and so drops from a pathwise gradient, and the policy's drift is controlled by the step size and by the refit of the critic.

\paragraph{Training target.}
No intermediate labels are needed. States are taken from rollouts of the sampler together with the rollout's reward $R(s_{K+1})$, and the critic is trained by binary cross-entropy, (\ref{eq:critic_loss}) for the 1D critic and~(\ref{eq:critic2d_loss}) for the 2D critic; the minimiser of either over all functions of the state is the success probability $V^\star_k(s_k)=p_\theta(R(s_{K+1})=1\mid s_k)$. \emph{Pre-training.} Before RL the critic is trained on rollouts of the task-SFT policy, with states stored at several steps spanning the trajectory and each state labelled by its rollout's reward; the copy of the denoiser and the head $f_\phi$ are trained together for a fixed number of epochs, and the epoch is chosen by a held-out score of the critic (Appendix~\ref{app:details}). The rollouts are drawn under the training sampler, or, when its early states carry no information about the outcome, under a cheaper sampler of the same denoiser, whose states the critic reads at their own time. \emph{During RL} the copy is frozen and only the head is trained. This is a cost and stability trade-off: refitting the copy after every actor step would cost a full fine-tuning step and would move the actor's objective between actor steps, whereas the head is a small correction on fixed features that a few steps refit.

\paragraph{Per-position dependence is not restricted to be affine.} With the features held fixed, if a head on the mean feature could be written as a mean of per-position readings, it would be affine (Proposition~\ref{prop:additive}(a)): a head on the mean weighs the positions only linearly, through the single bottleneck $\bar h$; a head applied per position may depend on each reading through any function, for instance one that saturates when a reading is confident.

\subsection{Design choices of the joint training}
\label{app:joint_details}
Algorithm~\ref{alg:jpc} refits the head after every actor step. Its remaining choices are the following trade-offs.

\emph{Where the head is refitted.} Each iteration rolls out a fresh batch of prompts to the end of the trajectory, decodes and verifies them, and pushes their state at the top of the window, $s_{k_{\rm hi}}$, with the reward into a replay buffer; the head is then trained on minibatches from the buffer with~(\ref{eq:critic2d_loss}) (its first term for the 1D critic). The refit is made at one step only, the top of the window, where the outcome is most determined by the state, so that the head follows the policy's changes rather than the residual randomness of the outcome. The critic at the earlier steps of the window is the pre-trained one, corrected only through what is shared, the head and its time input, on features the joint phase does not change. This is a design choice for the stability of the refit, not a consequence of the objective, and its cost is that the check below tests the critic at $k_{\rm hi}$ only.

\emph{Staleness of the buffer.} The buffer holds the newest $N_{\mathcal B}$ states, pushed $B$ per iteration, so it spans $N_{\mathcal B}/B$ iterations of past policies and the head is fitted to a mixture of their values. This is a second approximation, made for the sake of the sample size of the refit; with the values of Appendix~\ref{app:details} the span is $512$ iterations, a third to a half of a run, so the head is fitted to a genuine mixture of past policies rather than to the current one. What makes it usable is that the actor's step is small and that the check below is made on the newest states only.

\emph{The critic stays ahead of the actor.} A fixed number $m_{\rm c}$ of head steps per actor step suffices while the policy moves slowly. Every $n_q$ iterations the critic's discrimination is measured on the newest $N_q$ buffer states by the AUC, the area under the ROC curve of $V_\phi$ as a score for the reward (the probability that a random successful state scores above a random failed one). If it falls below a floor $a_0$, the critic takes $m'_{\rm c}$ extra steps before the actor moves again. The AUC at $k_{\rm hi}$ is a proxy for a necessary condition, that the critic still separates outcomes under the current policy, not a test of slope agreement across the window.

\emph{Step size.} The bias of a learned critic is paid at every actor step in proportion to the step, which motivates a decreasing step; the schedule itself, a cosine decay to a tenth of the initial value, is a choice. The critic's learning rate is constant.

\emph{One sampler.} The value function is defined by the sampler: a different sampler ($\gamma$, $K$) induces different trajectories and different values, so critic and actor are trained under the sampler that will be deployed, with one exception: on Countdown the critic's \emph{pre-training} uses states of the cheap sampler, because the deployment sampler's early and middle states carry almost no information about the outcome (Appendix~\ref{app:details}); the joint refit and the actor still run on the deployment sampler. Appendix~\ref{app:loop} also uses the critic to rank and nudge candidates on a cheaper sampler; that use rests on the assumption that a state's reading, at its own time, ranks candidates on that sampler too, which the training argument does not cover.

\subsection{The amortization loop}
\label{app:loop}
The value and the critic are defined by the sampler under which the policy was trained, and the RL improvement is made along that sampler's trajectories; a cheaper sampler, with fewer steps or $\gamma=0$, runs different trajectories through the same denoiser, and an improvement along one need not carry over to the other. The loop moves the improvement into the model's supervised objective, which trains the denoiser at every time $t$ rather than along one sampler, and uses the critic of \methodSVGTwoD{} a second time, to steer generation before the verifier can.

\paragraph{One round.}
Let $p_\theta(\rvw\mid\rvc)$ be the distribution of decoded responses under the sampler that draws the round's samples. A round takes a pool $\mathcal P$ of prompts without labels, disjoint from the evaluation prompts, draws $n$ responses per prompt, keeps the verified ones, and fine-tunes the model on them:
\begin{align}
\mathcal A(\theta)&=\Big\{(\rvc,\rvw^{(j)}):\ \rvc\in\mathcal P,\ 1\le j\le n,\ R(\rvw^{(j)})=1\Big\},\qquad \rvw^{(1)},\dots,\rvw^{(n)}\overset{\text{iid}}{\sim}p_\theta(\cdot\mid\rvc),
\label{eq:accept}\\
\theta'&=\text{the point reached by descending }\mathcal L_{\rm SFT}\big(\vartheta;\mathcal A(\theta)\big)\text{ from }\vartheta=\theta,
\label{eq:sft_step}
\end{align}
where $\mathcal A(\theta)$ is first reduced in place, by removing duplicate responses of a prompt and keeping at most $M$ per prompt (the reduced set is the one~(\ref{eq:sft_step}) and Algorithm~\ref{alg:loop} use), the descent runs for a fixed number of epochs, and $\mathcal L_{\rm SFT}$ is ELF's training loss on the accepted pairs, the objective that produced the task-SFT model \citep{hu2026elf}: for $(\rvc,\rvw)\in\mathcal A$ with clean embeddings $\rvx$ of $\rvw$, a random time $t$ and noise $\epsilon$ on the generated positions, $\rvz_t=(1-t)\epsilon+t\rvx$ on $\mathcal G$ with the prompt positions held at $\rvc$,
\begin{equation}
\mathcal L_{\rm SFT}(\vartheta;\mathcal A)=\mathbb E_{(\rvc,\rvw)\in\mathcal A}\Big[\,\mathbb E_{t,\epsilon}\sum_{i\in\mathcal G}\big\|\mathrm d_{\vartheta}(\rvz_t,t,\rvx_{\rm prev},\rvc)[i]-\rvx[i]\big\|^2-\sum_{i\in\mathcal G}\log p_{\vartheta}(w_i\mid\rvx)\Big],
\label{eq:sft}
\end{equation}
the regression of the clean embeddings on the generated positions plus the cross-entropy of the decoder on the clean embeddings. The self-conditioning input $\rvx_{\rm prev}$ is, as in ELF's training, the network's own estimate from a preceding call without gradient for half of the examples and zero for the rest, and the prompt is dropped at random for guidance training; both are kept as in the base training. Rounds are chained: $\theta^{(\varrho+1)}$ is~(\ref{eq:sft_step}) applied to $\theta^{(\varrho)}$ for rounds $\varrho=0,\dots,n_{\rm r}-1$, and the final round's model is the result (Algorithm~\ref{alg:loop}).

\paragraph{What a round does.}
A round is rejection-sampling fine-tuning \citep{yuan2023rft,singh2023restem,zelikman2022star} with the RL policy as the sampler: the accepted responses of a prompt are samples from the policy conditioned on success, and fitting the model to them raises the likelihood of successful responses under the model's own training objective. Two properties of~(\ref{eq:sft}) are the design reasons for using it rather than a sampler-specific loss: it is the objective the model was built with, so a round is ordinary fine-tuning, and it trains the denoiser at all times $t$, which is the mechanism by which the improvement reaches every sampler rather than only the one that produced the samples. A round gives no direct signal on a prompt with no accepted response, so its coverage, the fraction of the pool with at least one success, bounds what the round can teach directly. This is where the critic enters.

\paragraph{Critic-guided sampling.}
The 2D critic is a function of partial states, so it can rank and improve candidates before they are decoded. Both operations use the per-position readings $r_{k,i}$ of~(\ref{eq:value2}) through the score
\begin{equation}
R_{\min}(s_k)=\min_{i\in\mathcal G}r_{k,i},
\label{eq:minscore}
\end{equation}
the most pessimistic of the $|\mathcal G|$ readings of the outcome. The minimum is used here, and the mean in training, because the two tasks differ: the actor improves the policy on average and needs a gradient at every position, whereas a single candidate is accepted only if the whole response is right, so what limits it is its least confident reading, and a gradient of the minimum acts on that reading. \emph{Pruning:} for each prompt, $n_{\rm g}$ trajectories are run in parallel to a step $k_p$; the trajectory with the largest $R_{\min}(s_{k_p})$ continues alone. \emph{Guidance:} at every stochastic step $k=1,\dots,K-1$ (it needs $\gamma>0$), before the denoiser call, the re-noised embeddings are moved along the gradient of the score on the generated positions,
\begin{equation}
\rvz_{t_k}\ \leftarrow\ \rvz_{t_k}+\kappa\,\psi_kc_\sigma\sqrt L\ \frac{\nabla_{\rvz}R_{\min}(s_k)}{\|\nabla_{\rvz}R_{\min}(s_k)\|},
\label{eq:guide}
\end{equation}
a step whose length is proportional to the noise scale $\psi_kc_\sigma$ of that step, a fraction $\kappa\sqrt{L/(|\mathcal G|d)}$ of the norm $\psi_kc_\sigma\sqrt{|\mathcal G|d}$ of the noise the sampler injects there, so that the pushed state stays within the spread of the sampler. The gradient with respect to the state in~(\ref{eq:guide}) is the same kind of quantity as the critic's slope that the actor multiplies with the denoiser Jacobian in~(\ref{eq:pwgrad}); guidance applies it to the sample at inference instead of to the parameters at training. A guided response costs $n_{\rm g}$ trajectories up to step $k_p$, one afterwards, and one backward pass through the critic per step; guided pools are therefore drawn with a cheap sampler (small $K$ and $\gamma$), and the search is meant to compensate the cheap sampler's lower accuracy. Ranking states of a sampler other than the critic's training sampler is an assumption of this use (Appendix~\ref{app:joint_details}). Verification and fine-tuning are unchanged.

\begin{algorithm}[H]
\caption{Amortization loop (with optional critic-guided sampling)}
\label{alg:loop}
\small
\begin{algorithmic}[1]
\REQUIRE policy $\theta^{(0)}$; optional critic $V_\phi$ of \methodSVGTwoD{}; verifier $R$; prompt pool $\mathcal P$; per round: a sampler and whether guidance is used; responses per prompt $n$; cap $M$; rounds $n_{\rm r}$; guidance constants $(n_{\rm g},k_p,\kappa)$
\FOR{round $\varrho=0,\dots,n_{\rm r}-1$}
  \FOR{each prompt $\rvc\in\mathcal P$ and $j=1,\dots,n$}
    \STATE Run the round's sampler under $\theta^{(\varrho)}$; with guidance, apply~(\ref{eq:guide}) at every step and prune $n_{\rm g}\to1$ at step $k_p$ by~(\ref{eq:minscore}); decode $\rvw^{(j)}$
  \ENDFOR
  \STATE $\mathcal A\leftarrow$ the verified, distinct responses, at most $M$ per prompt \hfill (eq.~\ref{eq:accept})
  \STATE $\theta^{(\varrho+1)}\leftarrow$ descend $\mathcal L_{\rm SFT}(\vartheta;\mathcal A)$ from $\vartheta=\theta^{(\varrho)}$ for the round's epochs \hfill (eq.~\ref{eq:sft_step})
\ENDFOR
\STATE \textbf{return} $\theta^{(n_{\rm r})}$
\end{algorithmic}
\end{algorithm}

\section{Experimental details}
\label{app:details}

\paragraph{Base models.} The released-model rows of Table~\ref{tab:main} evaluate each released checkpoint with the task prompt and no task training: GPT-2 (124M), LLaDA-8B-Instruct (256 generated tokens), and ELF-B's OpenWebText checkpoint under the deployment sampler.
ELF-B \citep{hu2026elf}: 105M-parameter transformer denoiser, frozen T5-small \citep{raffel2020t5} encoder embeddings ($d{=}512$), x-prediction with self-conditioning and training-time classifier-free guidance, PyTorch port of the released code. A denoiser call $\mathrm d_\theta$ combines a prompt-conditioned and an unconditioned evaluation, $\rvx_{t_k}=\rvx_{\rm u}+\omega(\rvx_{\rm c}-\rvx_{\rm u})$, with guidance weight $\omega=2$ in every rollout and evaluation (Appendix~\ref{app:datasets}); the critic's copy is evaluated without guidance. Supervised fine-tuning uses the released recipe (AdamW, learning rate $2\times10^{-3}$ with warm-up, global batch 256--512, EMA of weights; EMA decay 0.99 for the short Game24 run and 0.999 otherwise). The Countdown base is epoch 60 of a 100-epoch run; Sudoku uses the converged model (76 epochs on 400k puzzles); K\&K the epoch-4 checkpoint of a 40-epoch schedule stopped at epoch 9; Game24 the epoch-20 checkpoint of a 40-epoch run initialised from the Countdown model.

\paragraph{Critic.}
The critic reuses the denoiser as a feature extractor: a copy of the whole network, fine-tuned during pre-training (learning rate $10^{-5}$) and frozen during RL, reads the state $s_k=(\rvz_{t_k},\rvx_{t_{k-1}})$ at the time of the corresponding denoiser call, exactly as the sampler's call does, and returns the position features $\rvh=(h_1,\dots,h_L)$ of Section~\ref{sec:critic}. The MLP head $f_\phi$ ($d{+}1\to256\to64\to1$, GELU; the extra input is the time) produces the per-position readings $r_{k,i}$ of~(\ref{eq:value2}), pooled over the generated positions. The 1D critic is the same network with the head applied to the mean of the features (eq.~\ref{eq:value1}), pre-trained on the same rollouts and stored steps with the same epoch budget and selection (kept epochs 10, 3, 10 and 6 on Countdown, Sudoku, K\&K and Game24; the 2D heads keep epochs 9, 1, 10 and 6) and trained jointly with the same settings; the two rows of Table~\ref{tab:main} differ in nothing else. Pre-training: rollouts of the task-SFT policy with states stored at several denoising steps and the verifier reward, the BCE loss~(\ref{eq:critic_loss}) or~(\ref{eq:critic2d_loss}), batch 64, and the epoch with the best held-out field score kept (Table~\ref{tab:critic_pretrain}); the field score is the mean over the stored steps of half the Pearson correlation between the minimum per-position reading and the reward plus half the within-prompt rank correlation between the pooled value and the reward (for the 1D critic, which has no per-position readings, the pooled value replaces the minimum). The states are sampled under the deployment sampler except on Countdown, where states of the deployment sampler at the early and middle steps carry almost no information about the outcome (held-out AUC about $0.6$ for $k\le24$ of 64) and the critic is pre-trained on states of the cheap $\gamma4$@16 sampler instead; the choice is made by the held-out field score.
\begin{table}[H]\centering\small
\caption{Critic pre-training per task (Algorithm~\ref{alg:jpc}, Require).}\label{tab:critic_pretrain}
\begin{tabular}{@{}lllll@{}}\toprule
task & states from & rollouts & stored steps & epochs (kept)\\\midrule
Countdown & $\gamma4$@16 & 64k & 1, 2, 4, 6, 8, 10, 12, 14 of 16 & 12 (9)\\
Sudoku & $\gamma28$@32 & 32k & 1, 6, 12, 18, 24 of 32 & 12 (1)\\
K\&K & $\gamma56$@64 & 32k & 1, 8, 16, 24, 32, 40, 48 of 64 & 10 (10)\\
Game24 & $\gamma56$@64 & 16k & 1, 8, 16, 32, 48 of 64 & 6 (6)\\\bottomrule\end{tabular}\end{table}

\paragraph{\methodSVGScalar{} and \methodSVGTwoD{} (Algorithm~\ref{alg:jpc}).}
32 prompts per iteration, deployment-sampler rollouts without gradients, one random step in the window $\{8,\dots,40\}$ of 64 ($\{4,\dots,20\}$ of 32) recomputed with gradients, loss $-V_\phi(s_k)$ averaged over prompts, AdamW on the actor with gradient clipping at 1.0, learning rate $10^{-6}$ decayed on a cosine to $10^{-7}$ over the run (1500 iterations on Countdown and Game24, 1000 on Sudoku and K\&K). Replay buffer of 16{,}384 states, filled each iteration from 32 fresh prompts rolled out to the end of the trajectory (separately from the actor's rollouts) with the state at the top of the window ($k{=}40$ of 64; $20$ of 32) and its outcome; 2 critic steps per iteration on minibatches of 64, updating the head at learning rate $10^{-4}$ with the copy of the denoiser frozen, loss~(\ref{eq:critic2d_loss}); every 10 iterations the pooled value's AUC (area under the ROC curve) against the outcome is computed on the newest 256 buffer states, and if it is below $0.9$ the critic takes 8 extra steps. In the symbols of Algorithm~\ref{alg:jpc}: $T{=}1500$ (Countdown, Game24) or $1000$ (Sudoku, K\&K), $B{=}32$, $B'{=}64$, $N_{\mathcal B}{=}16{,}384$, $m_{\rm c}{=}2$, $m'_{\rm c}{=}8$, $n_q{=}10$, $N_q{=}256$, $a_0{=}0.9$, $\mathcal W=\{8,\dots,40\}$ of 64 ($\{4,\dots,20\}$ of 32), $\eta_\theta=10^{-6}\to10^{-7}$, $\eta_\phi=10^{-4}$. Checkpoints are saved every 250 iterations (\methodGRPO{}: every 100) and evaluated as described under \emph{Evaluation} below. The 1D critic has no per-position readings and minimises~(\ref{eq:critic_loss}). The cheap-sampler row of Table~\ref{tab:needs} and its reference use 1000 iterations at a constant learning rate of $10^{-6}$, a buffer of the states at steps $8,16,24,32,40$ of 64, the pooled loss term only and no AUC check. Runs take 45--90 minutes for 1000 iterations on one H200 (2--3 hours on an L40).

\paragraph{\methodGRPO{}.}
Group size $G{=}8$, 16 prompts per iteration (128 rollouts), $\gamma4$@16 rollouts, group-normalised advantage as in~(\ref{eq:grpo_loss}), no KL penalty, no ratio clipping (the clipped variants are in Table~\ref{tab:grpo_variants}), learning rate $3\times10^{-6}$, 600 iterations, checkpoints every 100 iterations, evaluated as under \emph{Evaluation} below. The sum over denoising steps in~(\ref{eq:grpo_loss}) is estimated on 8 transitions drawn uniformly at random per update, with uniform weight over steps and positions.

\paragraph{AR control.}
GPT-2 (124M) \citep{radford2019gpt2} is fine-tuned on each task's training set (batch 128, learning rate $3\times10^{-4}$, warm-up 500; 10 epochs, 20 on Sudoku), decoded greedily and scored by the same verifiers as the flow model: Countdown 44.2\% (epoch 8), Sudoku 1.2\% (23 of 2{,}000 puzzles at epoch 20), K\&K 92.9\% (epoch 10), Game24 26.9\% (epoch 2, initialised from the Countdown-fine-tuned weights, as the flow model is). Token-level GRPO/PPO uses 32 prompts $\times$ 8 samples per iteration, clip 0.2, no KL, 2 inner epochs, 600 iterations evaluated every 50, learning rate $10^{-5}$ on Countdown, Sudoku and Game24 and $3\times10^{-6}$ on K\&K, where the supervised policy is already sharp; dropout is disabled during RL, as in standard PPO implementations, so that the two forward passes that form the ratio agree; the log importance ratio is clamped to $\pm10$ as a numerical safeguard. Curves: Appendix~\ref{app:ar_curves}.

\paragraph{Amortization loop.}
The loop of Table~\ref{tab:loop} runs 4 rounds on Countdown and 5 on K\&K (Table~\ref{tab:loop_full} adds one Sudoku round; Game24 has no pool of unlabelled problems, since it is split by number set). In each round the current policy samples its pool at the deployment sampler (Countdown: a disjoint unlabelled pool of 50k problems, $n{=}4$ samples per problem; Game24, for the check only: the 4{,}693 other orderings of the training sets, $n{=}8$; K\&K: the 50k training problems in round 1 and a fresh 50k pool in every later round, $n{=}2$; Sudoku: 50k fresh puzzles, $n{=}2$), verifier-accepted samples are kept with at most four (Countdown, Game24) or one (unique-solution tasks) per problem, and the policy is fine-tuned on them with the supervised recipe (20 epochs on Countdown, 8 elsewhere) from the previous round's final checkpoint; the tables report the final checkpoint of the last round. Starting policies: a 2D-critic checkpoint from the runs of Table~\ref{tab:main}, the SFT base, and a \methodGRPO{} checkpoint. The gold-label control starts from the 2D-critic policy and fine-tunes, with the same rounds and schedule, on the gold labels of the same pools instead of the policy's samples (one gold derivation per problem; on K\&K the round-1 pool is the 50k training problems with their labels). On Countdown, rounds 3--4 of the 2D-policy loop sample the pool with the critic-guided sampler of Appendix~\ref{app:loop}: the cheap sampler $\gamma4$@16, $n_{\rm g}{=}4$ trajectories per prompt pruned at $k_p{=}8$ by the score~(\ref{eq:minscore}), and the push~(\ref{eq:guide}) with $\kappa{=}5$ at every step, all with the jointly trained 2D critic of the starting policy; this restores the pool's coverage (Table~\ref{tab:loop_full}). The controls have no critic and sample plainly in every round.

\paragraph{Evaluation.}\label{app:seeds}
Every number is one evaluation of the held-out prompts (2{,}000 per task, 1{,}601 for Game24) at classifier-free guidance 2. The held-out prompts are split by index into a validation half (even indices) and a test half (odd indices): checkpoints are selected by their validation accuracy at the deployment sampler (\methodGRPO{}: every 100 of 600 iterations; the SVG variants: every 250; LLaDA: every 960 inner steps; ties to the earlier checkpoint) and every reported accuracy is measured on the test half.

\paragraph{Rollout budgets.}
\methodGRPO{}: 16 prompts $\times$ 8 samples $=128$ rollouts per iteration under $\gamma4$@16 (16 denoiser calls each), 600 iterations, about $1.2$M denoiser calls per run. Algorithm~\ref{alg:jpc}: 32 actor rollouts and 32 buffer rollouts per iteration under the deployment sampler (64 calls each; 32 on Sudoku), 1{,}000--1{,}500 iterations, about $4$--$6$M denoiser calls per run ($2$M on Sudoku) plus one differentiable call per actor rollout; \methodGRPO{} is thus the cheaper of the two by three to five times in denoiser calls (under two on Sudoku), and it gains little under either sampler (Appendix~\ref{app:grpo_chain}). GPT-2's token-level GRPO/PPO: 32 prompts $\times$ 8 samples per iteration, 600 iterations.

\paragraph{Compute.}
The campaign used about 300 GPU-hours on A100 (40/80 GB), H200 and L40 nodes; a single pathwise run costs about one GPU-hour on an H200, a loop round 15--40 minutes.

\section{Additional results}
\label{app:results}

\subsection{Main result: supporting results}
the learning curves of the autoregressive reference (Appendix~\ref{app:ar_curves}), \methodGRPO{} read under both samplers (Appendix~\ref{app:grpo_chain}), and the recipe of the masked-diffusion reference (Appendix~\ref{app:d1}).

\subsubsection{Autoregressive reference: learning curves}
\label{app:ar_curves}
Table~\ref{tab:ar_curves} gives the curves behind the GPT-2 rows of Table~\ref{tab:main}: the supervised stage by epoch and token-level GRPO/PPO by iteration. Token-level RL climbs steadily on Countdown and Game24, much of it within the first 100 iterations; on K\&K, where the supervised policy is already sharp, it holds; on Sudoku, where the supervised model solves about one puzzle in a hundred, the reward is too sparse to move it. The autoregressive policy sees the same problems and verifier as the flow policies and at least as many rollouts per iteration; what differs is the action space the estimator acts on.

\begin{table}[H]\centering\small
\caption{AR control (GPT-2, 124M), held-out accuracy (\%) with greedy decoding. Top: supervised fine-tuning by epoch (Sudoku: epochs 2, 4, \dots, 20, scored by the SFT script's solved-grid metric; the verifier gives 1.2 at epoch 20). Bottom: token-level GRPO/PPO by iteration (32 prompts $\times$ 8 samples per iteration). Bold: the highest cell of each row.}\label{tab:ar_curves}
\resizebox{\linewidth}{!}{%
\begin{tabular}{@{}lcccccccccc@{}}\toprule
SFT epoch & 1 & 2 & 3 & 4 & 5 & 6 & 7 & 8 & 9 & 10\\\midrule
Countdown & 26.5 & 30.8 & 38.1 & 38.8 & 41.1 & 43.9 & 42.4 & \textbf{44.2} & 43.6 & 42.8\\
Sudoku (epochs 2--20) & \phantom{0}0.6 & \phantom{0}0.9 & \phantom{0}1.1 & \phantom{0}1.3 & \phantom{0}1.3 & \phantom{0}1.4 & \phantom{0}1.4 & \phantom{0}1.4 & \textbf{\phantom{0}1.7} & \phantom{0}1.7\\
K\&K & 28.2 & 46.6 & 68.7 & 82.8 & 87.1 & 88.5 & 91.4 & 92.3 & 92.8 & \textbf{92.9}\\
Game24 (init.\ from the Countdown model) & 19.6 & \textbf{26.9} & 22.1 & 21.3 & 19.9 & 21.7 & 21.0 & 20.8 & 20.7 & 21.6\\
Game24 (init.\ from vanilla GPT-2) & \phantom{0}1.1 & \phantom{0}9.8 & 11.4 & \textbf{12.4} & \phantom{0}9.0 & \phantom{0}9.5 & 10.3 & 11.2 & \phantom{0}9.7 & \phantom{0}9.7\\
\midrule
RL iteration & 0 & 100 & 150 & 200 & 300 & 400 & 500 & 550 & 600 & \\\midrule
Countdown, $10^{-5}$ & 44.2 & 52.3 & 55.2 & 54.8 & 57.8 & 59.0 & 60.6 & 61.3 & \textbf{61.7} & \\
Sudoku, $10^{-5}$ & \phantom{0}1.2 & \phantom{0}1.1 & \phantom{0}1.4 & \textbf{\phantom{0}1.6} & \phantom{0}1.6 & \phantom{0}1.5 & \phantom{0}1.6 & \phantom{0}1.4 & \phantom{0}1.6 & \\
K\&K, $3\times10^{-6}$ & 92.9 & 93.0 & \textbf{93.2} & 93.1 & 92.9 & 92.8 & 92.7 & 92.8 & 92.7 & \\
Game24, $10^{-5}$ & 26.9 & 39.8 & 40.2 & 42.5 & 45.8 & 44.2 & 42.8 & \textbf{46.2} & 46.0 & \\
\bottomrule\end{tabular}}\end{table}

\subsubsection{\methodGRPO{} under its own sampler and under the deployment sampler}
\label{app:grpo_chain}
\methodGRPO{} is trained under the 16-step $\gamma4$@16 sampler (the cheap sampler except on Sudoku) and, like every row of Table~\ref{tab:main}, read under the deployment sampler. Two checks show that this mismatch is not what limits it. Read under its own sampler, the Countdown runs gain about one point over the base. Trained under the deployment sampler instead, with the same recipe otherwise, \methodGRPO{} gains $+0.1$ on Countdown and $+3.5$ on Game24. The estimator gains little under either sampler.

\subsubsection{The masked-diffusion reference: d1 on our tasks}
\label{app:d1}
d1 \citep{zhao2025d1} is the most used RL recipe for masked diffusion language models: LLaDA-8B-Instruct \citep{nie2025llada} with LoRA, a supervised stage, and diffu-GRPO, a GRPO variant whose per-token log-probabilities come from one forward pass in which all completion tokens are masked. That estimator requires a mask token and a discrete masked-diffusion model, so it cannot be applied to a continuous flow model; \methodGRPO{} is its counterpart on ELF. We ran d1's own recipe, unchanged, on its own base model and on our four held-out sets, with the authors' code and hyperparameters (LoRA rank 128 on all attention and MLP projections, six generations per prompt, 256-token completions with 128 diffusion steps, block length 32, learning rate $3\times10^{-6}$, 12 inner iterations, KL weight 0.04, clip $0.5$), prompts and rewards in d1's own formats, and accuracy as our exact-success metric. The supervised stage is completion-only masked SFT on the same training data as ELF-B's task SFT (3--20 epochs per task); diffu-GRPO then runs from the merged SFT model for 8{,}000 inner steps, about 32k rollouts, the budget of our pathwise runs. The comparison is across architectures and training histories: LLaDA-8B-Instruct was never trained on these tasks and its supervised stage is a LoRA run, whereas ELF-B is trained in full, so we make no claim about the ceiling of masked diffusion models. What Table~\ref{tab:main} shows is the behaviour of the estimator: diffu-GRPO moves the supervised model by under three points on every task, consistent with d1's own results on its easier three-number Countdown and $4{\times}4$ Sudoku.

\subsection{The estimator}
This group supports the estimator paragraph of Section~\ref{sec:experiments}:  the parts of GRPO put back one at a time (Appendix~\ref{app:grpo_variants}), the two estimators on the same rollouts (Appendix~\ref{app:snr}), and one step along each gradient (Appendix~\ref{app:onestep}).

\subsubsection{GRPO with ratio, clipping and anchor}
\label{app:grpo_variants}
\methodGRPO{} takes one optimizer step per batch of rollouts and has no KL anchor; at the parameters that drew the batch every importance ratio equals 1, so the first step of clipped GRPO with $\beta=0$ coincides with a step on~(\ref{eq:grpo_loss}). Table~\ref{tab:grpo_variants} puts the parts of the original GRPO objective back, on the rollout budget of Table~\ref{tab:main}: two steps per batch with the ratio taken per denoising step and position (which the diagonal covariance of~(\ref{eq:kernel}) allows), then the clip at $\epsilon=0.2$, then the anchor at $\beta=0.04$ on the closed-form KL of the Gaussian transitions. The second step and the clip add a few points on Game24 and K\&K and under a point on Countdown; the anchor at its usual strength takes the clipped two-step form back to about the level of \methodGRPO{}. With either row the order of Table~\ref{tab:main} is unchanged: the 2D critic stays ahead of the clipped two-step form on every task.

\begin{table}[H]\centering\small
\caption{GRPO variants on the flow model, held-out accuracy (\%) under the deployment sampler. All rows draw the same rollouts per iteration for the same number of iterations as the \methodGRPO{} row of Table~\ref{tab:main}.}\label{tab:grpo_variants}
\begin{tabular*}{\linewidth}{@{\extracolsep{\fill}}lcccccc@{}}\toprule
 & steps per batch & ratio & clip & anchor & Countdown & Game24 \\\midrule
base & & & & & 42.2 & 45.0 \\
\methodGRPO{} & 1 & --- & --- & --- & 43.1 & 46.8 \\
ratio only & 2 & yes & --- & --- & 42.6 & 50.1 \\
ratio and clip & 2 & yes & 0.2 & --- & 44.7 & 51.1 \\
GRPO, ratio, clip and anchor & 2 & yes & 0.2 & 0.04 & 43.2 & 46.0 \\\bottomrule
\end{tabular*}

\vspace{8pt}
{\setlength{\tabcolsep}{3pt}\begin{tabular*}{\linewidth}{@{\extracolsep{\fill}}lcccc@{}}\toprule
 & Countdown & Sudoku & K\&K & Game24 \\\midrule
\methodGRPO{} & 43.1 & 69.7 & 71.6 & 47.5 \\
ratio and clip, two steps & 43.9 & 68.0 & 74.1 & 50.0 \\\bottomrule
\end{tabular*}}\end{table}

\subsubsection{The two estimators on the same rollouts: SNR and half-split cosine}
\label{app:snr}
The top right panel of Table~\ref{tab:main} asks, on one set of Game24 rollouts of the base under the deployment sampler and with one differentiable denoiser call as in Algorithm~\ref{alg:jpc}, how well the mean gradients of two random halves of the rollouts agree. The ruler is the $A{=}1$ control, whose true mean is zero. The flow model's score-function gradient is indistinguishable from it under every form of the advantage up to $2{,}048$ rollouts; GPT-2's token-level gradient looks the same at $256$ rollouts and then separates as $N$ grows, the signature of a weak but genuine per-sample signal (the estimator is the same, the action space is not); the pathwise gradient through the same denoiser call agrees with itself already at $256$ rollouts, with an SNR nearly three times the floor. Table~\ref{tab:snr} gives the numbers; the derivation of the two gradients is in Appendix~\ref{app:variance}.

\begin{table}[!t]
\centering\small
\caption{The score-function~(\ref{eq:sf}) and pathwise~(\ref{eq:pwgrad}) estimators on Game24 rollouts of the task-SFT policy at denoising step $k{=}24$ of 64 (256 rollouts for the SNR, up to 2{,}048 for the half-split cosine; pathwise term with the pre-trained 2D critic). SNR: norm of the mean per-sample gradient over its root-mean-square deviation; a zero-mean estimator sits at $1/\sqrt N=0.0625$, the $A{=}1$ row. Half-split cosine: agreement between the mean gradients of two disjoint halves of the rollouts (mean$\pm$sd over 10 splits) as the number of rollouts grows; a genuine signal rises with $N$, a zero-mean estimator stays at $0$. Last two rows: the same statistic for GPT-2's token-level GRPO on Game24 (the panel's line) and on Countdown. Bold: the pathwise estimator.}
\label{tab:snr}
\setlength{\tabcolsep}{4pt}
\begin{tabular}{@{}lc|ccc@{}}
\toprule
 & SNR, $N{=}256$ & \multicolumn{3}{c}{half-split cosine} \\
 & & $N{=}256$ & $N{=}1024$ & $N{=}2048$ \\
\midrule
score-function, mean-centred $A$ & 0.061 & $-0.03\pm0.03$ & $-0.04\pm0.03$ & $0.00\pm0.04$ \\
score-function, GRPO $A$ (groups of 4) & 0.065 & $0.03\pm0.03$ & $0.02\pm0.03$ & $-0.03\pm0.03$ \\
score-function, $A{=}1$ (zero-mean control) & 0.063 & $0.02\pm0.02$ & $0.04\pm0.02$ & $0.02\pm0.03$ \\
pathwise, 2D critic & \textbf{0.177} & $\mathbf{0.79\pm0.03}$ & $\mathbf{0.94\pm0.01}$ & $\mathbf{0.97\pm0.00}$ \\
\midrule
GPT-2, token-level GRPO (Game24) & --- & $0.09\pm0.05$ & $0.35\pm0.04$ & $0.34\pm0.07$ \\
GPT-2, token-level GRPO (Countdown) & --- & $0.05\pm0.05$ & $0.17\pm0.05$ & $0.27\pm0.02$ \\
\bottomrule
\end{tabular}
\end{table}

\subsubsection{One step along each gradient}
\label{app:onestep}
The half-split cosine says that the pathwise gradient is reproducible, not that it points the right way. The lower right panel of Table~\ref{tab:main} tests the direction by intervention at iteration 0 of RL, with the task-SFT base and the pre-trained 2D critic and no training. We draw $1{,}024$ rollouts of $128$ training prompts under the deployment sampler and compute, on the same rollouts and the same four denoising steps of the window per rollout, the mean score-function gradient of~(\ref{eq:grpo_loss}) and the mean pathwise gradient of $V_\phi(s_k)$. The policy is then moved once, $\theta'=\theta+\eta\,g/\lVert g\rVert$, and evaluated on the held-out prompts under the deployment sampler with the sampler's random stream fixed, so that every $\theta'$ sees the same prompts and the same noise as the base. The length $\eta$ is measured against $\Lambda$, the parameter distance that a 2D-critic run of Table~\ref{tab:main} travels from the base; each gradient is applied as AdamW applies a first step (global clipping at 1, then $g/(|g|+\epsilon)$ per coordinate) and, for comparison, raw; a Gaussian random direction is the control. Table~\ref{tab:onestep} gives the paired change in accuracy. In the form the optimizer uses, one pathwise step gains about four points on Game24 at a quarter to a half of $\Lambda$ and three to four on Countdown from a half of $\Lambda$ on, with intervals that exclude zero; the score-function step and the random step stay within about a point of the base at every length. The raw gradients are a different matter: over $0.9$ of the squared norm of either gradient lies in one percent of the coordinates, so a raw step of any useful length moves a few coordinates far outside the region where the critic was fitted and the raw pathwise step destroys the policy; the per-coordinate normalisation of AdamW is part of what makes the pathwise update work. On Game24 the gain turns over at $\eta{=}\Lambda$: a single direction from a critic that is not refit cannot be followed as far as training goes.

\begin{table}[!t]\centering\small
\caption{One step from the base along each mean gradient of $1{,}024$ rollouts. Each cell is the paired change in held-out accuracy (points) against the base under the deployment sampler with the sampling noise fixed; brackets: $95\%$ bootstrap interval over prompts. \emph{AdamW}: the direction of a first AdamW step of that gradient; \emph{raw}: the gradient itself; all steps have length $\eta$.}\label{tab:onestep}
\begin{tabular}{@{}lccccc@{}}\toprule
$\eta/\Lambda$ & pathwise, AdamW & score-function, AdamW & random & pathwise, raw & score-function, raw \\\midrule
\multicolumn{6}{@{}l}{\emph{Game24} (base $43.0$, $\Lambda=1.30$ over $750$ iterations, 1601 prompts)}\\
$1/256$ & $+0.0$ {\scriptsize[-0.5, +0.5]} & $+0.2$ {\scriptsize[-0.2, +0.7]} & $+0.5$ & $+0.5$ & $+0.1$ \\
$1/128$ & $+0.1$ {\scriptsize[-0.5, +0.7]} & $+0.3$ {\scriptsize[-0.2, +0.9]} & $+0.3$ & $-0.2$ & $+0.1$ \\
$1/64$ & $+0.0$ {\scriptsize[-0.7, +0.7]} & $+0.2$ {\scriptsize[-0.2, +0.7]} & $+0.7$ & $-0.6$ & $+0.3$ \\
$1/32$ & $+0.9$ {\scriptsize[+0.0, +1.9]} & $+0.8$ {\scriptsize[+0.2, +1.5]} & $+0.0$ & $-1.3$ & $+0.4$ \\
$1/16$ & $+1.0$ {\scriptsize[-0.2, +2.1]} & $+0.9$ {\scriptsize[+0.2, +1.7]} & $+0.4$ & $-2.1$ & $+0.4$ \\
$1/4$ & $+4.2$ {\scriptsize[+2.4, +5.9]} & $+1.2$ {\scriptsize[-0.1, +2.5]} & $+0.4$ & $-13.1$ & $+0.3$ \\
$1/2$ & $+4.4$ {\scriptsize[+2.2, +6.6]} & $+1.6$ {\scriptsize[-0.1, +3.4]} & $+0.2$ & $-43.0$ & $-2.1$ \\
$1$ & $-1.2$ {\scriptsize[-3.7, +1.3]} & $+0.1$ {\scriptsize[-1.9, +2.2]} & $+0.2$ & $-43.0$ & $-2.1$ \\
\midrule
\multicolumn{6}{@{}l}{\emph{Countdown} (base $42.2$, $\Lambda=1.39$ over $1500$ iterations, 2000 prompts)}\\
$1/256$ & $-0.1$ {\scriptsize[-0.9, +0.5]} & $+0.1$ {\scriptsize[-0.7, +0.7]} & $+0.2$ & $-0.5$ & $-0.2$ \\
$1/128$ & $+0.1$ {\scriptsize[-0.6, +0.9]} & $-0.1$ {\scriptsize[-0.9, +0.5]} & $+0.3$ & $+0.1$ & $+0.2$ \\
$1/64$ & $-0.1$ {\scriptsize[-0.8, +0.7]} & $-0.1$ {\scriptsize[-0.8, +0.6]} & $+0.3$ & $-0.3$ & $-0.1$ \\
$1/32$ & $+0.2$ {\scriptsize[-0.5, +1.1]} & $-0.1$ {\scriptsize[-1.0, +0.7]} & $+0.4$ & $-0.9$ & $+0.2$ \\
$1/16$ & $+0.7$ {\scriptsize[-0.4, +1.7]} & $+0.4$ {\scriptsize[-0.8, +1.5]} & $+0.6$ & $-0.9$ & $+0.0$ \\
$1/4$ & $+2.6$ {\scriptsize[+1.0, +4.3]} & $-0.1$ {\scriptsize[-1.7, +1.4]} & $-0.4$ & $-6.7$ & $-0.4$ \\
$1/2$ & $+3.3$ {\scriptsize[+1.4, +5.3]} & $+0.2$ {\scriptsize[-1.8, +2.1]} & $+0.0$ & $-36.7$ & $-0.7$ \\
$1$ & $+3.9$ {\scriptsize[+1.6, +6.2]} & $-2.1$ {\scriptsize[-4.5, +0.1]} & $-0.4$ & $-42.2$ & $-4.1$ \\
\bottomrule\end{tabular}\end{table}

\subsection{The critic and the joint training}
This group supports the critic paragraphs of Section~\ref{sec:experiments}: how the critic's value develops along the trajectory and over positions (Appendices~\ref{app:critic_quality} and~\ref{app:critic_maps}), the two critics over training on all tasks (Appendix~\ref{app:traj}), the measurements showing that they are interchangeable at a fixed policy (Appendix~\ref{app:critic_equiv}), the frozen critic (Appendix~\ref{app:frozen}), and the training dynamics of the reference runs (Appendix~\ref{app:dynamics}).

\subsubsection{Critic quality along the trajectory}
\label{app:critic_quality}
Table~\ref{tab:critic_quality} reports, for the pre-trained 2D critic of each task, how well the pooled value $V_\phi$ of~(\ref{eq:value2}) at a \emph{noisy} state at step $k$ predicts the final reward of the trajectory it lies on: the Pearson correlation between $V_\phi$ and the reward over held-out rollouts, and pick@$V_\phi$, the top-1 accuracy of selecting a successful sample among a group of 8 rollouts of the same prompt by ranking on $V_\phi$ (only mixed groups count; the random baseline is the success rate of a random pick in such groups). The profile is the same on every task: near chance at $k{=}1$, informative from roughly a quarter of the trajectory, and essentially an oracle in the last third. This is the token-commitment profile of the sampler (Appendix~\ref{app:commit}); it is the reason the pathwise step is drawn from the middle of the trajectory rather than from its start, where the critic has no gradient to give, or its end, where the state is already committed. For Countdown and Game24 the per-position AUC of $r_{k,i}$ against oracle step labels is also reported; the K\&K and Sudoku verifiers do not produce per-position labels.

\begin{table}[H]\centering\small
\caption{Pre-trained 2D critics on held-out rollouts of the base policy (deployment sampler; Countdown critic trained on $\gamma4$@16 states). pick@$V_\phi$ over groups of 8 (random baseline in the header); Pearson correlation of the pooled value with the 0/1 reward.}\label{tab:critic_quality}
\begin{tabular}{@{}llccccc@{}}\toprule
task (random) & metric & \multicolumn{5}{c}{flow step $k$}\\\midrule
Countdown (0.32), 16 steps & $k$ & 1 & 4 & 8 & 14 & \\
& pick@$V_\phi$ & 0.35 & 0.56 & 0.97 & 1.00 & \\
& Pearson & 0.14 & 0.33 & 0.91 & 0.98 & \\
& token AUC & 0.22 & 0.27 & 0.46 & 0.47 & \\\midrule
Sudoku (0.53), 32 steps & $k$ & 1 & 8 & 16 & 24 & \\
& pick@$V_\phi$ & 0.55 & 0.89 & 0.96 & 0.98 & \\
& Pearson & 0.48 & 0.85 & 0.92 & 0.97 & \\\midrule
K\&K (0.53), 64 steps & $k$ & 1 & 8 & 16 & 32 & 48\\
& pick@$V_\phi$ & 0.58 & 0.74 & 0.84 & 0.95 & 0.99\\
& Pearson & 0.45 & 0.70 & 0.82 & 0.94 & 0.97\\\midrule
Game24 (0.53), 64 steps & $k$ & 1 & 8 & 16 & 32 & 48\\
& pick@$V_\phi$ & 0.57 & 0.61 & 0.80 & 0.96 & 0.99\\
& Pearson & 0.44 & 0.60 & 0.76 & 0.94 & 0.98\\
& token AUC & 0.71 & 0.70 & 0.71 & 0.72 & 0.73\\\bottomrule\end{tabular}\end{table}

\subsubsection{Per-position critic values as a credit map}
\label{app:critic_maps}
The 2D critic produces one reading per generated position at every step of the trajectory. Figure~\ref{fig:critic_maps} shows $r_{k,i}$ for held-out rollouts of the jointly trained critics on Game24 and Sudoku. Two things are visible. First, the pooled value separates successful from failed trajectories progressively along the trajectory: the critic has learned when the trajectory commits. Second, the per-position readings carry some localisation but are dominated by the sequence-level signal: on Sudoku, ranking all cells of 96 held-out samples by $r_{k,i}$ separates wrong from right cells well, but most of that comes from wrong samples being uniformly low; restricted to the cells within incorrect samples the AUC is $0.56$--$0.63$ at every $k$. On Game24 the maps are organised by equation, but they are not a step verifier. The per-position readings serve as a ranking and guidance score (Appendix~\ref{app:loop}); the actor ascends their pooled value.

\begin{figure}[H]
\centering
\includegraphics[width=0.58\linewidth]{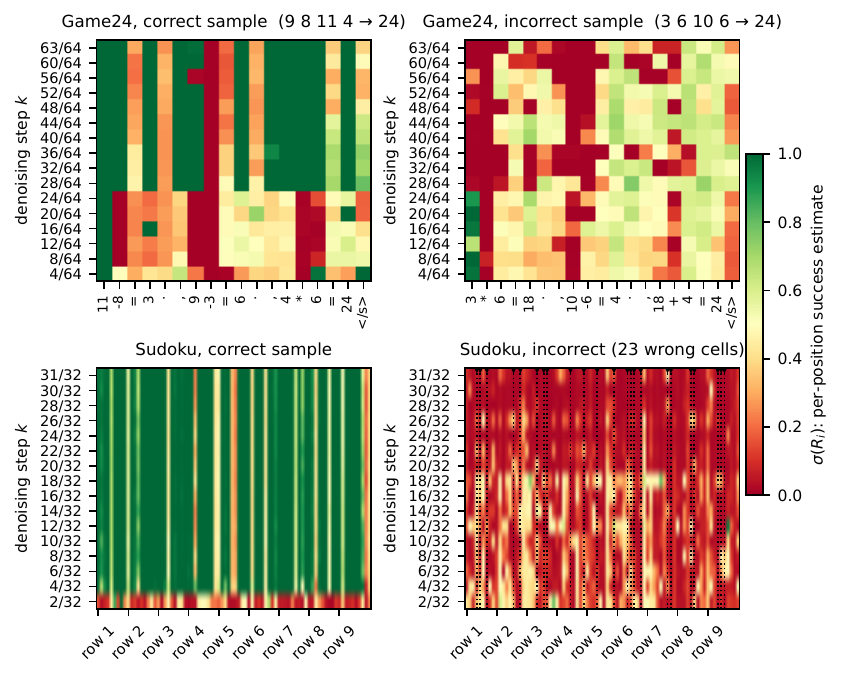}
\caption{Per-position critic readings $r_{k,i}$ (colour) along the deployment-sampler trajectory (rows: denoising step $k$; columns: generated positions, labelled by the decoded token for Game24 and by Sudoku row) for one correct and one incorrect held-out sample per task. The correct samples are the median-valued correct rollout; the incorrect Game24 sample is the median-valued failed rollout and the incorrect Sudoku sample the failed rollout with the fewest wrong cells (dotted lines and markers).}
\label{fig:critic_maps}
\end{figure}

\subsubsection{1D against 2D critic over training, all tasks}
\label{app:traj}
Figure~\ref{fig:traj} extends the left panel of Figure~\ref{fig:dyn} from Game24 to the four tasks and adds the AUC of the jointly trained head. On Game24 and K\&K both heads keep up with the policy and the 2D critic's policy climbs further; on Countdown the pooled head falls behind the moving policy and its policy turns back; on Sudoku both heads keep their AUC while both policies turn back early, the 2D critic's later and from a higher peak.

\begin{figure}[!t]
\centering
\includegraphics[width=0.85\linewidth]{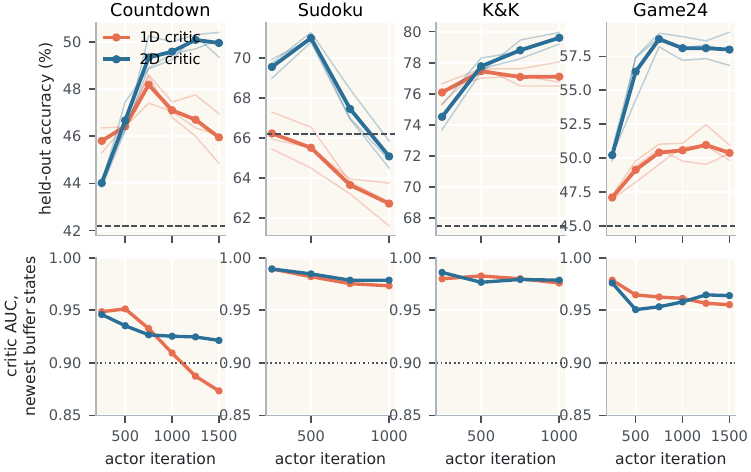}
\caption{1D against 2D critic over training: the runs of Table~\ref{tab:main}, individual runs (thin) and their mean (thick). Top: held-out accuracy (\%) at the deployment sampler of the checkpoint after that many actor iterations; dashed: the SFT base. Bottom: AUC of the jointly trained head against the outcome on the newest 256 replay-buffer states, averaged over 250-iteration windows of the training logs; dotted: the floor $a_0=0.9$ below which Algorithm~\ref{alg:jpc} gives the critic extra steps. Bottom-row curves are means over the runs. On Countdown the pooled head falls behind the moving policy and its policy turns back; on Sudoku both heads keep their AUC and the 1D critic's policy declines from the start; on Game24 and K\&K both heads keep up and the 1D critic's gain is smaller.}
\label{fig:traj}
\end{figure}

\subsubsection{The 1D and the 2D critic are interchangeable at a fixed policy}
\label{app:critic_equiv}
Table~\ref{tab:critic_equiv} compares the pre-trained 1D and 2D critics of the Table~\ref{tab:main} runs on the same states of the same policy, and their jointly trained heads on fixed probe sets. Measured this way the two critics discriminate the reward equally well, prescribe parameter updates at cosine $0.83$--$0.90$ of each other, react equally, and weakly, to a single wrong token, and their heads refit equally stably. None of the fixed-policy measurements separates the two critics: whatever distinguishes the policies they train (Figures~\ref{fig:dyn} and~\ref{fig:traj}) is not visible in either critic's value, slope or stability at a fixed policy, which is why we attribute the difference to the joint training rather than to what the critics can represent.

\begin{table}[!t]\centering\small
\caption{1D against 2D critic on the same states. Pre-trained critics unless a row says otherwise; $k$ is the denoising step of the state.}\label{tab:critic_equiv}
\footnotesize\setlength{\tabcolsep}{3pt}
\begin{tabular}{@{}p{0.35\linewidth}p{0.20\linewidth}>{\raggedright\arraybackslash}p{0.20\linewidth}>{\raggedright\arraybackslash}p{0.20\linewidth}@{}}\toprule
measurement & setting & 1D & 2D\\\midrule
Pearson correlation of the pooled value with the outcome, held-out rollouts & pre-training, Game24 $k{=}48$ / Sudoku $k{=}18$ / Countdown $k{=}14$ & 0.97 / 0.94 / 0.99 & 0.97 / 0.94 / 0.99\\
pathwise SNR of the actor's parameter gradient (Table~\ref{tab:snr} statistic) & Game24 $k{=}16$ / 24 / 32 & 0.19 / 0.17 / 0.20 & 0.18 / 0.18 / 0.15\\
half-split cosine of the actor's parameter gradient & Game24 $k{=}16$ / 24 / 32 & 0.81 / 0.78 / 0.83 & 0.81 / 0.79 / 0.70\\
cosine between the two critics' mean actor gradients & Game24 $k{=}16$ / 24 / 32 & \multicolumn{2}{l}{0.90 / 0.88 / 0.83}\\
cosine between the two critics' per-position gradient-norm profiles & Game24, real tokens & \multicolumn{2}{l}{0.94 / 0.94 / 0.94}\\
mean pairwise cosine of per-position state gradients $\partial V_\phi/\partial\rvz_{t_k}[i]$ (shared direction) & Game24 $k{=}24$ / Sudoku $k{=}12$ & 0.03 / 0.06 & 0.04 / 0.07\\
pooled $\Delta V_\phi$ (logits) for one wrong token in a successful state & Game24 $k{=}24$, re-encoded & $-0.86$ & $-0.89$\\
\quad the same, corruptions that break the answer & Game24 $k{=}24$ & $-1.76$ & $-2.04$\\
pooled $\Delta V_\phi$ for one wrong cell (never changes the decoded grid) & Sudoku $k{=}12$ & $-0.11$ & $-0.03$\\
jointly trained head: AUC on states of its own peak policy, iteration 0 $\to$ end & Game24 / Sudoku / Countdown & 0.97$\to$0.96 / 0.98$\to$0.98 / 0.90$\to$0.92 & 0.96$\to$0.96 / 0.98$\to$0.98 / 0.90$\to$0.91\\
jointly trained head: AUC on the newest buffer states, mean over training & Game24 / Sudoku / K\&K / Countdown & 0.96 / 0.98 / 0.98 / 0.92 & 0.96 / 0.98 / 0.98 / 0.93\\
actor gradient after 20 head refits: cosine to the gradient before them & Game24 $k{=}32$, buffer displaced 5\%/refit & 0.98 & 0.99\\
\bottomrule\end{tabular}\end{table}

\subsubsection{Frozen versus joint critic}
\label{app:frozen}
The right panel of Table~\ref{tab:critic_design} shows the exploit directly on Game24: the recipe of Algorithm~\ref{alg:jpc} with the pre-trained 2D critic held fixed. The pathwise update first helps, then the actor finds the critic's blind spots: the critic's value on the training batch climbs from about $3$ logits at iteration 250 to over $80$ by 1500 while the true success rate of the same batch falls from $78\%$ to $0\%$, and the held-out accuracy collapses. The identical run with the head refit after every actor step (Table~\ref{tab:main}) keeps the critic's value in the calibrated range. Increasing the frozen critic's quality (more pre-training rollouts, the 2D form) does not change this: whatever the critic is, the actor exploits it within a few hundred updates unless the critic is refreshed on the actor's own states.

\subsubsection{Training dynamics on Countdown and K\&K}
\label{app:dynamics}
Figure~\ref{fig:dynamics_cd_kk} shows the Algorithm~\ref{alg:jpc} runs with the 2D critic on the two tasks not covered by Figure~\ref{fig:dyn}. On Countdown the held-out accuracy rises for about 750 iterations and then holds while the replay-buffer success rate keeps climbing: the training batch keeps improving after the held-out set has levelled off. On K\&K the held-out accuracy rises monotonically and is still at its peak when the schedule ends; its buffer success rate and critic value rise together over the whole run.

\begin{figure}[H]
\centering
\includegraphics[width=0.85\linewidth]{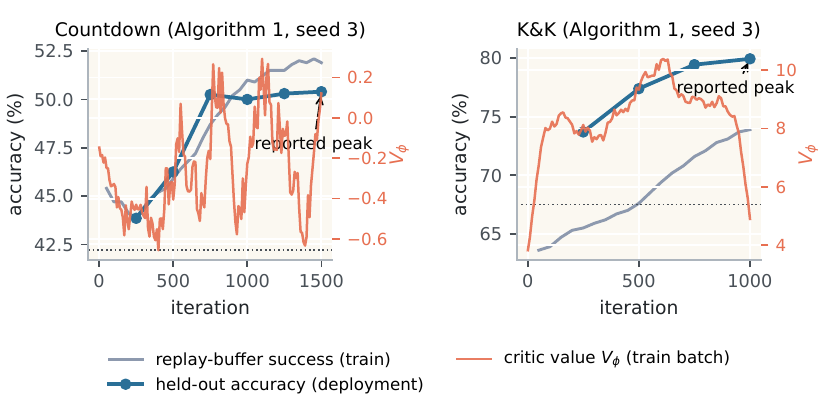}
\caption{Training dynamics of Algorithm~\ref{alg:jpc} (2D critic) on Countdown (left) and K\&K (right): held-out accuracy at the deployment sampler per checkpoint (blue), replay-buffer success rate (grey), and the critic's value on the training batch (red, right axis); dotted line: SFT base.}
\label{fig:dynamics_cd_kk}
\end{figure}

\subsection{The loop}
This group stands behind Table~\ref{tab:loop}: every round of every loop with its controls, and the critic-guided sampler that draws the later Countdown rounds.

\subsubsection{The loop in full: rounds, controls and the guided sampler}
\label{app:loop_full}
Table~\ref{tab:loop_full} gives every round behind Table~\ref{tab:loop} with the pool's coverage and the controls. On Countdown and K\&K the rounds lift the cheap sampler by 17 to 35 points from every start, RL or not. On K\&K fresh puzzles give near-full coverage from any start and the rounds close the task; RL adds nothing there that the rounds do not. On Sudoku one round covers three quarters of the pool from every start and moves accuracy by at most 1.4 points at either sampler, as the gold-solution control confirms. On Countdown two plain rounds shrink coverage and give back part of the RL gain; rounds 3 and 4, drawn with the critic-guided cheap sampler, raise coverage past its start and return the deployment sampler to the RL level, 5 to 7 points above sampling plainly from the base or the \methodGRPO{} policy. Two further plain rounds continued from the round-2 model (the two control rows after round 4) leave coverage and accuracy at the round-2 level, whereas the critic-guided rounds from the same model cover about six thousand more problems and recover $4.5$--$5.9$ points at deployment; the recovery is therefore due to the critic-guided pools, not to the additional rounds. 


\begin{table}[H]
\centering\small
\caption{The amortization loop, held-out accuracy (\%) at three samplers. Each block is one task: the SFT base, the 2D-critic policy the loop starts from, the model at the end of each round of Algorithm~\ref{alg:loop} (Countdown: all four rounds; K\&K: first and last of five; Sudoku: its one round), and, indented, the controls at their last round, which run the same rounds without a critic from the SFT base, from the \methodGRPO{} policy, and on the gold derivations of the same pool, and, for Countdown, two further plain rounds continued from the round-2 model. \emph{Covered}: pool problems (of 50k) with at least one verified sample in that round. Game24 has no pool of unlabelled problems. Sudoku's cheap sampler is $\gamma14$@16 and its deployment sampler $\gamma28$@32. Bold: the last round of each loop where it is over 1.0 point above the start.}
\label{tab:loop_full}
\setlength{\tabcolsep}{4.5pt}
\begin{tabular}{@{}llcccc@{}}
\toprule
 & pool sampler & covered & 16-step sampler & ODE-64 & deployment \\
\midrule
\multicolumn{6}{@{}l}{\emph{Countdown}}\\
SFT base & & & 23.8 & 28.5 & 42.8 \\
2D-critic policy (start) & & & 24.3 & 34.8 & 50.5 \\
round 1 & deployment sampler & 35.1k & 37.9 & 42.4 & 45.8 \\
round 2 & deployment sampler & 31.2k & 40.8 & 42.6 & 46.0 \\
round 3 & critic-guided, $\gamma4$@16 & 37.2k & \OURScdthreeg & \OURScdthreeo & \OURScdthree \\
round 4 & critic-guided, $\gamma4$@16 & 37.8k & \textbf{\OURScdg} & \textbf{\OURScdo} & \OURScd \\
\quad plain round 3 (control) & deployment sampler & 31.2k & 41.1 & 43.7 & 45.5 \\
\quad plain round 4 (control) & deployment sampler & 31.3k & 41.1 & 43.4 & 45.5 \\
\quad control: from the SFT base & deployment sampler & 31.2k & 42.7 & 44.5 & 46.4 \\
\quad control: from \methodGRPO{} & deployment sampler & 30.9k & 41.1 & 42.9 & 44.7 \\
\quad control: gold derivations & (labels) & 50.0k & 22.5 & 26.4 & 40.8 \\
\midrule
\multicolumn{6}{@{}l}{\emph{K\&K}}\\
SFT base & & & 58.6 & 57.5 & 68.2 \\
2D-critic policy (start) & & & 68.8 & 69.7 & 79.8 \\
round 1 & deployment sampler & 41.1k & 73.4 & 76.6 & 82.1 \\
round 5 & deployment sampler & 48.8k & \textbf{\OURSkkg} & \textbf{\OURSkko} & \textbf{\OURSkk} \\
\quad control: from the SFT base & deployment sampler & 49.4k & 93.5 & 97.0 & 99.6 \\
\quad control: from \methodGRPO{} & deployment sampler & 48.3k & 91.7 & 94.6 & 97.9 \\
\quad control: gold labels & (labels) & 50.0k & 94.8 & 98.0 & 99.6 \\
\midrule
\multicolumn{6}{@{}l}{\emph{Sudoku}}\\
SFT base & & & 60.0 & 60.6 & 64.0 \\
2D-critic policy (start) & & & 63.4 & 65.3 & 68.9 \\
round 1 & deployment sampler & 39.3k & 63.1 & 65.4 & \textbf{70.3} \\
\quad control: from the SFT base & deployment sampler & 36.1k & 58.7 & 60.7 & 63.6 \\
\quad control: from \methodGRPO{} & deployment sampler & 39.3k & 64.2 & 67.7 & 69.6 \\
\quad control: gold solutions & (labels) & 50.0k & 63.9 & 65.2 & 69.7 \\
\bottomrule
\end{tabular}
\end{table}

\section{Related work}
\label{app:related}
\paragraph{Actor-critic control.}
Stochastic value gradients \citep{heess2015svg} differentiate a learned value through a reparameterised policy and a transition model; DDPG and SAC \citep{lillicrap2015ddpg,haarnoja2018sac} are the model-free cases. The SVG variants of \method{} are this pattern on a generative sampler whose transition model is the sampler itself, with a value that is a success probability fitted to verifier rewards and, in \methodSVGTwoD{}, a reading per position.

\paragraph{Self-training with verifiers.}
STaR, RFT, ReST-EM and related work \citep{zelikman2022star,yuan2023rft,singh2023restem,huang2022selfimprove} fine-tune a model on its own verified generations; expert iteration \citep{anthony2017exit} alternates a stronger search-based policy with distillation. The amortization loop is this pattern with an RL policy as the sampler and, in its guided form, with the RL critic as the search heuristic; the controls in Appendix~\ref{app:loop_full} separate what the starting policy contributes from what the rounds contribute. DAV \citep{lee2026dav} casts diffusion alignment as variational expectation-maximisation; a round of Algorithm~\ref{alg:loop} can be read as the M-step of that view for one prompt, with the RL policy as the sampler.

\paragraph{Tasks.}
Countdown and Game24 follow \citet{gandhi2024sos} and \citet{yao2023tot}; Sudoku follows \citet{shah2024sudoku}; Knights-and-Knaves follows \citet{xie2024kk}.

\end{document}